\def\eqref#1{equation~\ref{#1}}
\def\1{\bm{1}}
\DeclareMathAlphabet{\mathsfit}{\encodingdefault}{\sfdefault}{m}{sl}
\SetMathAlphabet{\mathsfit}{bold}{\encodingdefault}{\sfdefault}{bx}{n}
\DeclareMathOperator*{\argmin}{arg\,min}
\newcommand{\ms}[2]{$#1_{(#2)}$}
\newtheorem{theorem}{Theorem}
\algnewcommand{\IIf}[1]{\State \textcolor{purple}{\textbf{if} #1 \textbf{then}}}
\algnewcommand{\EElse}{\State \textcolor{purple}{\textbf{else}}}
\algnewcommand{\EEndIf}{\State \textcolor{purple}{\textbf{end if}}}
\algnewcommand{\SState}[1]{\State \textcolor{purple}{#1}}
\algnewcommand{\CComment}[1]{\Comment{\textcolor{purple}{#1}}} 
\definecolor{darkblue}{RGB}{0, 0, 139} 
\algnewcommand{\IIff}[1]{\State \textcolor{darkblue}{\textbf{if} #1 \textbf{then}}}
\algnewcommand{\EElsee}{\State \textcolor{darkblue}{\textbf{else}}}
\algnewcommand{\EEndIff}{\State \textcolor{darkblue}{\textbf{end if}}}
\algnewcommand{\SStatee}[1]{\State \textcolor{darkblue}{#1}}
\newcommand{\rre}[1]{\textcolor{black}{#1}}
\definecolor{cvprblue}{rgb}{0.21,0.49,0.74}
\definecolor{citecolor}{HTML}{2980b9}
\definecolor{linkcolor}{HTML}{c0392b}
\def\ie{\emph{i.e}}
\title{SD-LoRA: Scalable Decoupled Low-Rank Adaptation for Class Incremental Learning}
\author{
  Yichen Wu\textmd{\textsuperscript{1,2}}{\footnotemark[1]}\textmd{,} ~Hongming Piao\textmd{\textsuperscript{1}}\footnotemark[1]\textmd{,} ~Long-Kai Huang\textmd{\textsuperscript{4}}\footnotemark[2]\textmd{,} ~Renzhen Wang\textmd{\textsuperscript{3}}\textmd{,} ~{Wanhua Li}\textmd{\textsuperscript{2}}\textmd{,}\\ ~\textbf{Hanspeter Pfister}\textmd{\textsuperscript{2}}, ~\textbf{Deyu Meng}\textmd{\textsuperscript{3,6}}, 
  ~\textbf{Kede Ma}\textmd{\textsuperscript{1}}\footnotemark[2], ~\textbf{Ying Wei}\textmd{\textsuperscript{{5}}}\footnotemark[2] \\
  \textsuperscript{1}City University of Hong Kong, \textsuperscript{2}Harvard University, \textsuperscript{3}Xi'an Jiaotong University, \\
  \textsuperscript{4}Tencent AI Lab,
  \textsuperscript{5}Zhejiang University,
  \textsuperscript{6}Pengcheng Laboratory\\
}
\begin{document}

\maketitle

\footnotetext[1]{Equal contribution.}
\footnotetext[2]{Corresponding authors.}

\begin{abstract}
Continual Learning (CL) with foundation models has recently emerged as a promising paradigm to exploit abundant knowledge acquired during pre-training for tackling sequential tasks. However, existing prompt-based and Low-Rank Adaptation-based (LoRA-based) methods often require expanding a prompt/LoRA pool or retaining samples of previous tasks, which poses significant scalability challenges as the number of tasks grows. 
To address these limitations, we propose Scalable Decoupled LoRA (SD-LoRA) for class incremental learning, which continually separates the learning of the magnitude and direction of LoRA components without rehearsal. Our empirical and theoretical analysis reveals that SD-LoRA tends to follow a low-loss trajectory and converges to an overlapping low-loss region for all learned tasks, resulting in an excellent stability-plasticity trade-off. Building upon these insights, we introduce two variants of SD-LoRA with further improved parameter efficiency. All parameters of SD-LoRAs can be end-to-end optimized for CL objectives. Meanwhile, they support efficient inference by allowing direct evaluation with the finally trained model, obviating the need for component selection. Extensive experiments across multiple CL benchmarks and foundation models consistently validate the effectiveness of SD-LoRA. The code is available at \url{https://github.com/WuYichen-97/SD-Lora-CL}.
\end{abstract}

\section{Introduction}
Continual Learning (CL,~\citeauthor{rolnick2019experience}~\citeyear{rolnick2019experience},\citeauthor{wang2024comprehensive}~\citeyear{wang2024comprehensive},\citeauthor{zhou2024class}~\citeyear{zhou2024class},\citeauthor{wang2022learning}~\citeyear{wang2022learning}) aims to develop computational learning systems capable of continually adapting to evolving environments while retaining previously acquired knowledge. In contrast to standard supervised learning, which assumes that training data are independent and identically distributed (i.i.d.), CL trains models on non-stationary data where tasks are presented sequentially. This departure from the i.i.d.~assumption introduces the central challenge of \textit{catastrophic forgetting}~\citep{french1999catastrophic, mcclelland1995there, mccloskey1989catastrophic,kirkpatrick2017overcoming}, indicated by a significant performance degradation of previous tasks when new tasks are introduced.

Over the last five years, foundation models---large-scale pre-trained neural networks---have proven highly effective at transferring knowledge while exhibiting strong resistance to catastrophic forgetting~\citep{wang2022learning,wang2022dualprompt,smith2023coda,huang2024ovor,wang2024hierarchical,liang2024inflora} in the context of CL. One prominent approach centers on adapting \textit{input and intermediate} representations (\ie, prompts) of foundation models to accommodate new tasks. For example, L2P~\citep{wang2022learning} and DualPrompt~\citep{wang2022dualprompt} incrementally learn a prompt pool and selectively insert the most relevant prompts based on their match with incoming test samples. CODA-Prompt~\citep{smith2023coda} refines this strategy by end-to-end optimizing the prompt selection module for CL objectives. Despite obviating manual task identifiers, these methods rely heavily on accurately identifying task-relevant prompts from a potentially growing pool, raising concerns about inference scalability. 
 
Another line of work has taken a more memory-intensive route by retaining samples of previous tasks to bolster performance. Building upon CODA-Prompt, HiDe-Prompt~\citep{wang2024hierarchical} continues to learn prompts incrementally but stores large quantities of samples. Likewise, InfLoRA \citep{liang2024inflora} leverages Low-Rank Adaptation (LoRA) \citep{hulora2022} to remain parameter-efficient, yet similarly rehearses extensive samples during incremental LoRA optimization. This reliance on large-scale memory makes them less scalable in real-world deployments, particularly in resource-constrained or large-scale CL settings.

Table~\ref{tab:related_work} outlines three desirable properties for  an ideal CL method with foundation models:
\begin{itemize}[leftmargin=4mm, itemsep=0mm, topsep=-0.5 mm]
\item \textbf{Rehearsal-free}: The method should eliminate the need to store samples from previous tasks, thereby ensuring learning scalability; 
\item \textbf{Inference Efficiency}: The method should maintain computational efficiency during inference, preferably without added computational costs, thereby ensuring inference scalability; 
\item \textbf{End-to-end Optimization}: All method parameters should be end-to-end optimized for CL objectives, rather than through segmented and separate optimization stages, thereby maximizing CL performance.
\end{itemize}

\begin{table}[t]
\centering
\caption{Comparisons of existing CL methods with foundation models in terms of three desirable properties: 1) \textit{Rehearsal-free} (\ie, without memory for sample storage), 2) \textit{inference efficiency} (\ie, without additional computational overhead during inference), and 3) \textit{end-to-end optimization} (of all model parameters for CL objectives).}
\label{tab:related_work}
    \scalebox{0.90}{ 
 	    \begin{tabular}{c|c|c|c} 
			\toprule
			\multicolumn{1}{l}{\multirow{2}{*}{{Method}}} &
			\multicolumn{1}{c}{\multirow{2}{*}{{Rehearsal-free }}} & 
                \multicolumn{1}{c}{\multirow{2}{*}{{Inference Efficiency}}}  &
			\multicolumn{1}{c}{\multirow{2}{*}{{End-to-end Optimization}}} \\ 
                
			\multicolumn{1}{l}{\multirow{2}{*}{}} &
			\multicolumn{1}{c}{\multirow{2}{*}{}} & 
			\multicolumn{1}{l}{\multirow{2}{*}{}} &
			\multicolumn{1}{c}{\multirow{2}{*}{}} \\
			\midrule
			\multicolumn{1}{l}{\multirow{1}{*}{L2P~\citep{wang2022learning}}} &
			\multicolumn{1}{c}{\multirow{1}{*}{\ding{51}}} & 
			\multicolumn{1}{c}{\multirow{1}{*}{\ding{55}}} &
			\multicolumn{1}{c}{\multirow{1}{*}{\ding{55}}} \\
			
			\multicolumn{1}{l}{\multirow{1}{*}{DualPrompt~\citep{wang2022dualprompt}}} &
			\multicolumn{1}{c}{\multirow{1}{*}{\ding{51}}} & 
			\multicolumn{1}{c}{\multirow{1}{*}{\ding{55}}} &
			\multicolumn{1}{c}{\multirow{1}{*}{\ding{55}}} \\

			\multicolumn{1}{l}{\multirow{1}{*}{CODA-Prompt~\citep{smith2023coda}}} &
			\multicolumn{1}{c}{\multirow{1}{*}{\ding{51}}} & 
			\multicolumn{1}{c}{\multirow{1}{*}{\ding{55}}} &
			\multicolumn{1}{c}{\multirow{1}{*}{\ding{51}}} \\

			\multicolumn{1}{l}{\multirow{1}{*}{HiDe-Prompt~\citep{wang2024hierarchical}}} &
			\multicolumn{1}{c}{\multirow{1}{*}{\ding{55}}} & 
			\multicolumn{1}{c}{\multirow{1}{*}{\ding{55}}} &
			\multicolumn{1}{c}{\multirow{1}{*}{\ding{51}}} \\
            
			\multicolumn{1}{l}{\multirow{1}{*}{InfLoRA~\citep{liang2024inflora}}} &
			\multicolumn{1}{c}{\multirow{1}{*}{\ding{55}}} & 
			\multicolumn{1}{c}{\multirow{1}{*}{\ding{51}}} &
			\multicolumn{1}{c}{\multirow{1}{*}{\ding{51}}} \\
            \midrule
			\multicolumn{1}{l}{\multirow{1}{*}
            {SD-LoRA(Ours)}} &
			\multicolumn{1}{c}{\multirow{1}{*}{\ding{51}}} & 
			\multicolumn{1}{c}{\multirow{1}{*}{\ding{51}}} &
			\multicolumn{1}{c}{\multirow{1}{*}{\ding{51}}} \\			
			\bottomrule
		\end{tabular}
  
		}
\end{table}

To achieve these properties, we propose Scalable and Decoupled LoRA (SD-LoRA), which incrementally adds LoRA components by separating the magnitude and direction learning. By directly employing the finally trained model for testing---without task-specific component selection---SD-LoRA supports computationally \textit{efficient inference}~\citep{huang2024ovor}, while being \textit{rehearsal-free}.
Through an in-depth empirical and theoretical analysis, we show that SD-LoRA learns to follow a low-loss path that converges to an overlapping low-loss region for all learned tasks, thus achieving an excellent stability-plasticity trade-off. Meanwhile, the importance of the incrementally learned LoRA directions diminishes as CL progresses. Building upon these observations, we introduce two variants of SD-LoRA with improved parameter efficiency through rank reduction and knowledge distillation, respectively. All SD-LoRA parameters can be \textit{end-to-end optimized} for CL objectives. 

In summary, the principal contributions of our work include

\begin{itemize}[leftmargin=4mm, itemsep=0mm, topsep=-0.5 mm]
    \item A CL method with foundation models---SD-LoRA, offering a \textit{rehearsal-free}, \textit{inference-efficient}, and \textit{end-to-end optimized} solution. We additionally include two SD-LoRA variants that improve parameter efficiency;
    \item An empirical and theoretical analysis of SD-LoRA, elucidating its plausible working mechanism that eliminates task-specific component selection;
    \item A comprehensive experimental evaluation of SD-LoRAs, demonstrating their effectiveness across multiple CL benchmarks and foundation models.
\end{itemize}

\section{Related Work} 
\textbf{Continual Learning (CL).} CL seeks to sequentially learn from new tasks while retaining previously acquired knowledge, with the central goal of mitigating catastrophic forgetting. Broadly, existing CL methods can be categorized according to three main design philosophies: Rehearsal-based, regularization-based, and architecture-based approaches. Rehearsal-based methods~\citep{riemer2018learning,chaudhry2019tiny,tiwari2022gcr} selectively retain and replay samples from previous tasks to alleviate catastrophic forgetting. Regularization-based methods~\citep{kirkpatrick2017overcoming, li2017learning, lee2019overcoming} introduce penalty terms into the training objective to constrain updates on parameters deemed important for learned tasks. Architecture-based methods~\citep{mallya2018piggyback, ebrahimi2020adversarial, ramesh2021model} expand or adapt the model architecture to account for new tasks. By allocating additional task-specific parameters or modules, these methods prevent the overwriting of learned important weights. Among various CL settings, this paper focuses on particularly challenging and practical class-incremental learning, in which the model must perform all learned tasks with no access to task identity at test time. Conventional class-incremental learning methods often require expensive training from scratch or parameter-intensive tuning, which can lead to overfitting and interference among tasks.

\textbf{CL with Foundation Models.} Foundation models have recently demonstrated their effectiveness in CL by facilitating knowledge transfer across tasks and reducing catastrophic forgetting~\citep{wang2022learning,wang2022dualprompt,smith2023coda,wang2024hierarchical,liang2024inflora}. Specifically, 
methods like L2P~\citep{wang2022learning}, DualPrompt~\citep{wang2022dualprompt}, and CODA-Prompt~\citep{smith2023coda} integrate Vision Transformers (ViTs) with prompt-tuning strategies~\citep{lester2021power,jia2022visual}, thereby improving knowledge retention as new tasks are introduced. Building on these, HiDe-Prompt~\citep{wang2024hierarchical} further stores a large number of samples to boost performance. Rather than prompt-tuning, InfLoRA~\citep{liang2024inflora} adopts a LoRA-based Parameter-Efficient Fine-Tuning (PEFT) approach, which similarly necessitates substantial sample storage. Despite their promise, none of the existing CL methods with foundation models simultaneously satisfy the three desirable properties outlined in Table~\ref{tab:related_work}. To fill this gap, we introduce SD-LoRA, which can also be viewed as a form of model-merging techniques~\citep{chitale2023task,ilharco2023editing}, developed in parallel with, yet complementary to ongoing CL research.

\textbf{Parameter-Efficient Fine-Tuning (PEFT).} The integration of PEFT methods with CL with foundation models is essential because full fine-tuning for each individual task is prohibitive in terms of computation and storage requirements. Representative PEFT methods include {adapters}~\citep{houlsby2019parameter}, which insert lightweight learnable modules into Transformer layers;  {prompt-tuning}~\citep{qin2021learning,jia2022visual} and {prefix-tuning}~\citep{li2021prefix}, which introduce learnable input representations into Transformer layers; and  {LoRA}~\citep{hulora2022}, which adds and tunes low-rank branches as updates to the pre-trained weights. While these techniques have proven effective in single-task and multi-task offline learning settings, their performance boundaries in CL with foundation models remain insufficiently explored. The proposed SD-LoRA adopts a rehearsal-free, LoRA-based PEFT approach for CL with foundation models.
\section{Proposed Method: SD-LoRA}
In this section, we first present the necessary preliminaries. We then present in detail the SD-LoRA method for CL with foundation models, accompanied by an empirical and theoretical analysis. Finally, we describe two SD-LoRA variants with improved parameter efficiency.

\subsection{Preliminaries}
\label{subsec:preliminary}
\textbf{Problem Formulation.} Let $\{\mathcal{T}_1, \mathcal{T}_2, \ldots, \mathcal{T}_N\}$ be $N$ sequential classification tasks. The training split of $\mathcal{T}_t$, denoted as $\mathcal{D}_t=\{\bm x^{(i)}_t, y^{(i)}_t\}_{i=1}^{\vert\mathcal{D}_t\vert} $, comprises of $|\mathcal{D}_t|$ training example pairs, where $\bm{x}_t^{(i)}$ representing the input image and $y_t^{(i)}$ its corresponding label. We consider a classification model $f_{\bm \theta}$, parameterized by $\bm \theta$. When training on $\mathcal{D}_t$, no data from previous tasks $\{\mathcal{T}_k\}_{k=1}^{t-1}$ is accessible. Accordingly, the training objective is given by
\begin{align}\label{eqn:lf}
    \ell\left(\mathcal{D}_t;\bm \theta\right) = \frac{1}{\vert\mathcal{D}_t\vert}\sum_{i=1}^{\vert\mathcal{D}_t\vert}\ell\left(f_{\bm \theta}\left(\bm x^{(i)}_t\right), y^{(i)}_t\right), 
\end{align}
where $\ell(\cdot, \cdot)$ is a per-sample loss function such as cross-entropy. For model evaluation, we may compute the average loss of $f_{\bm \theta}$ across all tasks encountered so far: $\frac{1}{t}\sum_{k=1}^t\ell(\mathcal{V}_k;\bm \theta)$, where $\mathcal{V}_k$ denotes the test split of  $\mathcal{T}_k$. That is, the overarching goal is to ensure that $f_{\bm \theta}$ performs well on both the current task and all previous tasks. We generally 
follow the class-incremental learning setting described in~\citep{wang2022learning,liang2024inflora}. 


\textbf{Low-Rank Adaptation (LoRA).} As illustrated in Fig.~\ref{fig:se-lora}(a), LoRA~\citep{hulora2022} constrains the parameter updates during fine-tuning to lie in a low-rank subspace. Concretely, let $\mathbf{W}_0 \in \mathbb{R}^{m\times n}$ denote the original weight matrix of a layer in the classifier $f_{\bm \theta}$. LoRA expresses the parameter update $\Delta \mathbf{W} \in \mathbb{R}^{m\times n}$ as the product of two learnable matrices $\mathbf{A}\in\mathbb{R}^{m\times r}$ and  $\mathbf{B}\in \mathbb{R}^{r\times n}$, \textit{i.e.}, $\Delta \mathbf{W} = \mathbf{AB}$, with $r \ll \min\{m,n\}$. For a given layer of $f_{\bm \theta}$, the LoRA-updated output is  
\begin{align}
    {\bm h}'= \mathbf{W}_0\bm{x} +\Delta\mathbf{W}\bm{x} = (\mathbf{W}_0 + \mathbf{A}\mathbf{B})\bm{x}.
\end{align}
Throughout fine-tuning, the original weight matrix $\mathbf{W}_0$ remains fixed.

\begin{figure*}[t]
\centering
\includegraphics[width=0.93 \textwidth]{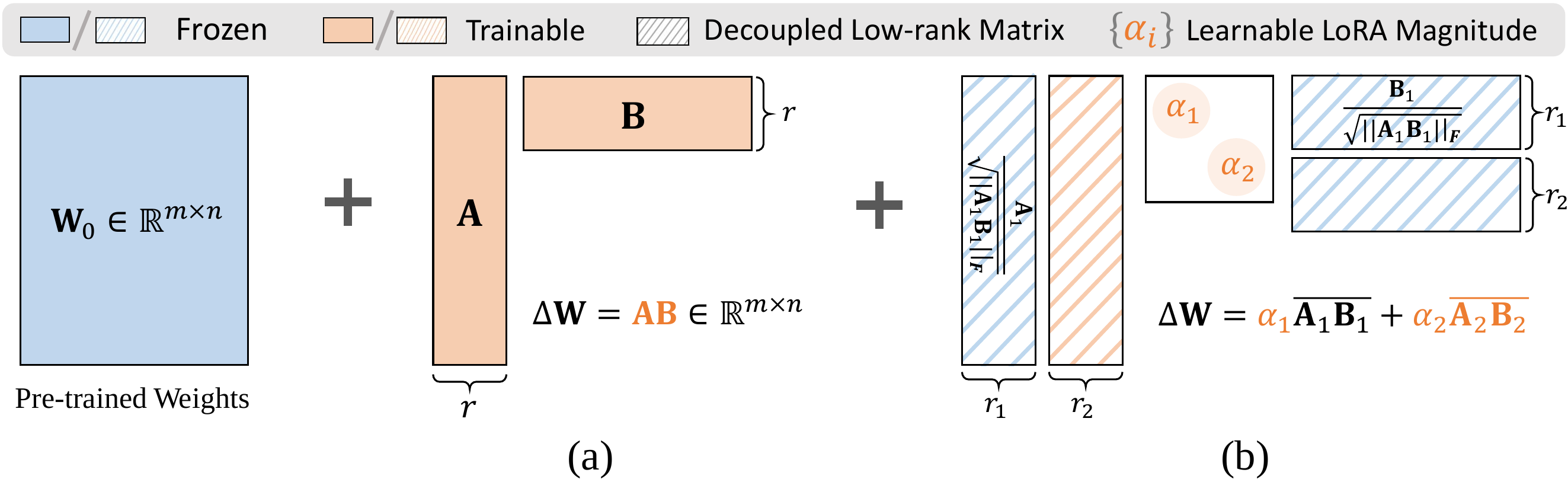}
\caption{Illustration of the parameter update in \textbf{(a)} Vanilla LoRA and \textbf{(b)} the proposed SD-LoRA, where the current task index is $t=2$ and $r, r_1, r_2 \ll\min\{m,n\}$.}
\label{fig:se-lora}
\end{figure*}

\subsection{SD-LoRA}
In LoRA, the parameter update $\Delta\mathbf{W}$ can be decomposed as follows:
\begin{align}
    \Delta \mathbf{W}=\Vert\mathbf{A}\mathbf{B}\Vert_F \cdot \overline{\mathbf{A}\mathbf{B}} =\Vert\mathbf{A}\mathbf{B}\Vert_F\cdot \frac{\mathbf{A}\mathbf{B}} {\Vert\mathbf{A}\mathbf{B}\Vert_F}.
\end{align}
This decomposition highlights two crucial elements of the update: The \textit{magnitude} (\ie, the Frobenius norm  $\Vert\mathbf{A}\mathbf{B}\Vert_F$) and \textit{direction} (\ie, the normalized matrix $\overline{\mathbf{A}\mathbf{B}}$). Recent work~\citep{liudora2024} has demonstrated that compared to full fine-tuning, LoRA exhibits limited flexibility in precisely adjusting these two elements. This drawback hinders its performance on complex tasks that demand fine-grained control over both magnitude and direction. Furthermore, \citet{qiu2023controlling} highlighted a more critical role of the direction in fine-tuning than the magnitude. 

Motivated by these observations, we describe SD-LoRA for CL with foundation models. In a nutshell, SD-LoRA incrementally decouples the learning of the magnitude and direction of LoRA components, while fixing the directions learned from previous tasks as CL progresses. Concretely, let {$\mathcal{M} = \{\alpha_k\}_{k=1}^t$} denote the learnable LoRA magnitudes, and {$\mathcal{W}=\{\overline{\mathbf{A}_k\mathbf{B}_k}\}_{k=1}^{t-1}$} represent the previously learned directions. As illustrated in Fig.~\ref{fig:se-lora}(b), during learning on $\mathcal{T}_t$, SD-LoRA computes the output of a given layer of the classifier $f_{\bm \theta}$ by
\begin{align}
\label{eqn:se-lora}
   \bm h' =    (\mathbf{W}_0+\textcolor{orange}{\alpha_1}\overline{\mathbf{A}_1\mathbf{B}_1} + \textcolor{orange}{\alpha_2}\overline{\mathbf{A}_2\mathbf{B}_2} +\ldots +\textcolor{orange}{\alpha_t \overline{\mathbf{A}_t\mathbf{B}_t}})\bm{x},
\end{align}
where the color-highlighted terms $\{\alpha_k\}_{k=1}^t$ and {$\overline{\mathbf{A}_t\mathbf{B}_t}$} are learnable. The original weight matrix {$\mathbf{W}_0$}  and the previously learned directions { $\{\overline{\mathbf{A}_k\mathbf{B}_k}\}_{k=1}^{t-1}$} remain fixed. 
\subsection{Empirical Analysis of SD-LoRA} 
\label{subsec: analysis}
By incrementally decoupling the magnitude and direction of LoRA components while preserving the directions learned from previous tasks, we observe substantial performance gains across various CL benchmarks (see Sec.~\ref{sec:exp}). Nevertheless, the underlying working mechanism---particularly how SD-LoRA mitigates catastrophic forgetting---remains poorly understood. To shed light on this, we conduct a series of experiments and distill our observations into three key findings.
\begin{figure*}[t]
\centering
\includegraphics[width=0.94 \textwidth]{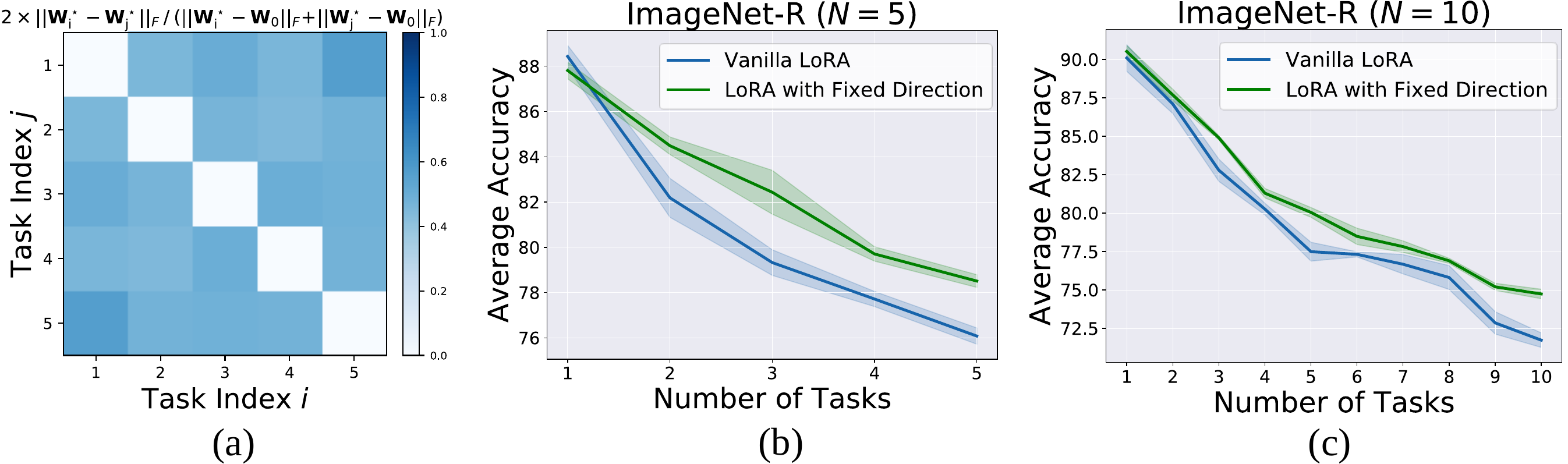}
\caption{\textbf{(a)} Distances between the five optimal weights {$\{\mathbf{W}_i^\star\}$} on ImageNet-R ($N=5$) relative to the foundation model weights {$\mathbf{W}_0$}. All relative distances are much smaller than one, indicating that $\{\mathbf{W}^\star_i\}$ are closer to each other than to  {$\mathbf{W}_0$}.
\textbf{(b)} and \textbf{(c)} Performance comparison of Vanilla LoRA versus LoRA with the first learned direction fixed, on ImageNet-R across five and ten tasks, respectively. Shaded regions indicate standard error.}
\label{fig:finding1}
\end{figure*}

\textbf{Finding 1:} \textit{When fine-tuning the foundation model directly on different downstream tasks, the resulting task-specific weights end up closer to each other than the original model weights.} To illustrate this, we consider five tasks drawn from ImageNet-R~\citep{boschini2022transfer}, and fine-tune the ViT-B-16 model~\citep{dosovitskiy2020image} for each task, thereby obtaining five sets of optimal task-specific weights { $\{\mathbf{W}^\star_i\}_{i=1}^5$}. As shown in Fig.~\ref{fig:finding1}(a), measuring the relative distances in parameter space reveals that these task-specific weights cluster more closely with one another than the original weights of the foundation model {$\mathbf{W}_0$}.

We further conduct a CL experiment in which only the LoRA magnitude is continually optimized, while the direction remains fixed after the first task. Consequently, the updated output for a given layer at the current {$\mathcal{T}_t$} becomes { $\bm h' = (\mathbf{W}_0 + \textcolor{orange}{\alpha} \overline{\mathbf{A}_1\mathbf{B}_1}) \bm x$}. As shown in Figs.~\ref{fig:finding1}(b) and (c), the average accuracy up to the current task consistently surpasses that of the vanilla LoRA baseline, \ie, {$\bm h' = (\mathbf{W}_0 + \textcolor{orange}{\mathbf{A}\mathbf{B}})\bm{x}$ }. Aligning well  with~\citep{entezarirole2022,gueta2023knowledge}, our results further indicate that the fine-tuned weights for different tasks lie in close proximity, enabling relatively strong performance even when fixing a single learned direction.

\begin{figure*}[h]
\centering
\includegraphics[width=0.98 \textwidth]{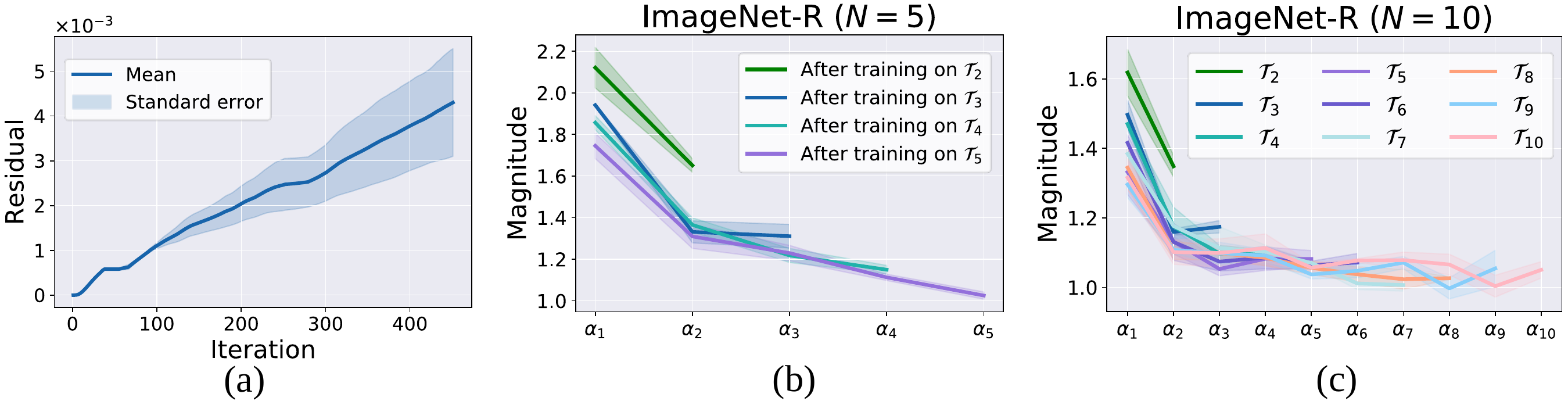}
\caption{Analysis of the learning process of SD-LoRA. \textbf{(a)} Least squares fitting residual between the newly learned direction {$\overline{\mathbf{A}_t\mathbf{B}_t}$} and all previous directions {$\{\overline{\mathbf{A}_k\mathbf{B}_k}\}_{k=1}^{t-1}$} over time. \textbf{(b)} and \textbf{(c)} Learned magnitudes $\{\alpha_k\}_{k=1}^{N}$ on ImageNet-R across five and ten tasks, respectively.}
\label{fig:finding2}
\end{figure*}

\textbf{Finding 2:} \textit{The directions preserved from previous tasks (i.e., {$\{\overline{\mathbf{A}_k\mathbf{B}_k}\}_{k=1}^{t-1}$}) play a significant role in the CL process---particularly those learned in the initial tasks.} 
To reveal this, we first compute the least squares fitting residual between {$\overline{\mathbf{A}_t\mathbf{B}_t}$} and {$\{\overline{\mathbf{A}_k\mathbf{B}_k}\}_{k=1}^{t-1}$}, which increases over time (see Fig.~\ref{fig:finding2}(a)). Initially, the newly learned direction strongly aligns with earlier ones, enabling learned direction reuse. As training continues, {$\overline{\mathbf{A}_t\mathbf{B}_t}$} gradually diverges, incorporating subtle variations that distinguish it from previous directions.

Further analysis of the learned magnitudes, all initialized to ones, reveals that $\alpha_k$ values corresponding to earlier tasks rise rapidly, while those for later tasks exhibit a general decline trend (see Figs.~\ref{fig:finding2}(b) and (c) as well as Appendix~\ref{supp:alpha_values}). This pattern suggests that the classifier increasingly relies on directions learned from the earlier tasks, whereas the recently introduced directions serve primarily as slight adjustments to accommodate the specific requirements of the later tasks.

\begin{figure*}[t]
\centering
\includegraphics[width=0.98 \textwidth]{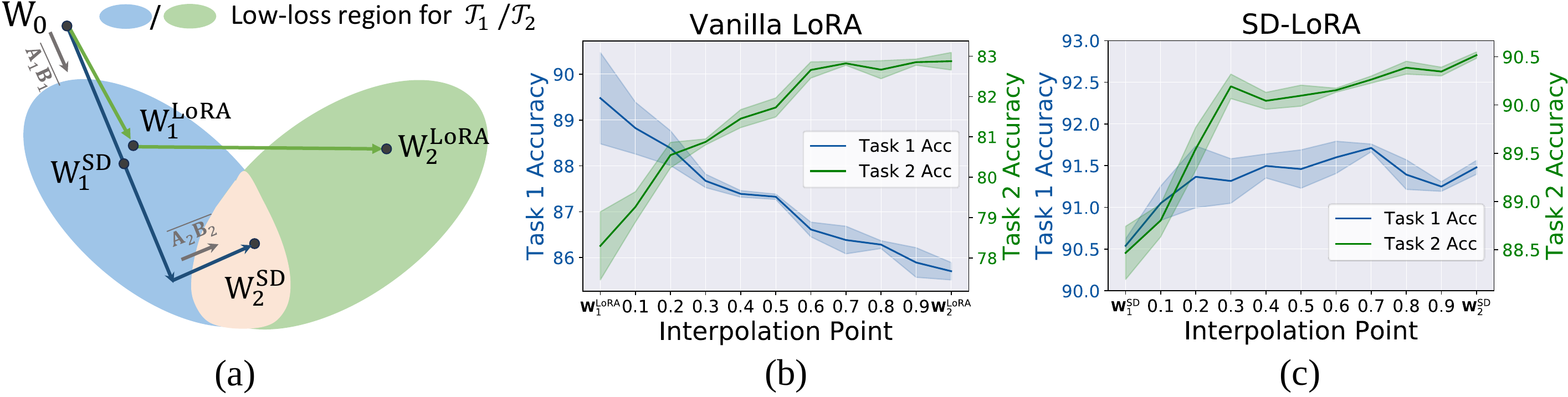}
\caption{Learning trajectory comparison of vanilla LoRA and SD-LoRA. \textbf{(a)} Toy illustration of the learning trajectories for vanilla LoRA ($\mathbf{W}_0 \rightarrow {\mathbf{W}}_1^{\textrm{LoRA}} \rightarrow {\mathbf{W}}_2^{\textrm{LoRA}}$) and SD-LoRA ($\mathbf{W}_0 \rightarrow {\mathbf{W}}_1^{\textrm{SD}} \rightarrow {\mathbf{W}}_2^{\textrm{SD}}$) across two sequential tasks. \textbf{(b)} Classification accuracy along the vanilla LoRA path. The improvement on {$\mathcal{T}_2$} but degradation on $\mathcal{T}_1$ indicates that vanilla LoRA suffers from catastrophic forgetting. \textbf{(c)} Classification accuracy along the SD-LoRA path, showing that
it successfully lands on an overlapping low-loss region.}

\label{fig:finding3}
\end{figure*}
\textbf{Finding 3: } \textit{SD-LoRA effectively uncovers a low-loss path by leveraging the fixed directions from previous tasks with the learned magnitudes $\{\alpha_k\}_{k=1}^N$, toward a low-loss region shared by all tasks.} 
To verify this, we conduct weight interpolation experiments to examine the linear path between the two sets of model weights for two sequential tasks. As depicted in Fig.~\ref{fig:finding3}, along the linear path from  {${\mathbf{W}}^\textrm{SD}_1$} to {${\mathbf{W}}^\textrm{SD}_2$} learned by SD-LoRA, the performance on {$\mathcal{T}_2$} steadily improves without loss in accuracy on {$\mathcal{T}_1$}. However, this is not the case for vanilla LoRA, where the performance improvement of {$\mathcal{T}_2$} is at the expense of {$\mathcal{T}_1$}, indicative of catastrophic forgetting. These observations suggest that SD-LoRA selectively scales the parameter update along the previously learned directions, effectively enabling the classifier to trace a low-loss path that ultimately settles on an overlapping low-loss region for all tasks (see Fig.~\ref{fig:finding3}~(a)).

These findings elucidate why SD-LoRA excels in  CL with foundation models. Initially, it identifies critical LoRA directions during learning earlier tasks, and relies heavily on these directions to guide the classifier toward an overlapping low-loss region for learned tasks. Subsequently, by progressively incorporating LoRA components, SD-LoRA refines these directions to converge on the shared low-loss region for both earlier and later tasks. This mechanism of tracing a low-loss trajectory eliminates the need to store samples from previous tasks for task-specific component selection, making SD-LoRA strong and rehearsal-free.

\subsection{Theoretical Analysis of SD-LoRA}
\label{sec:theory}
In this subsection, based on the results in~\citep{jiang2023algorithmic}, we present a theoretical analysis to explain why the initially learned LoRA directions are so critical (as in Finding 2).

Let {$\Delta \mathbf{W}^\star\in\mathbb{R}^{m\times n}$} be the optimal update matrix lying in the overlapping low-loss region for all $N$ sequential tasks. Additionally, let { $\{\Delta\mathbf{W}^\star_i\}_{i=1}^N$} represent the optimal update matrices in their respective low-loss regions. Denote the singular values of {$\Delta \mathbf{W}_t^\star$ as $\sigma_{1}\geq \ldots \geq \sigma_{\min\{m,n\}}\ge 0$.} The matrices {$\mathbf{A}\in \mathbb{R}^{m\times r}$} and { $\mathbf{B}\in \mathbb{R}^{r\times n}$} are updated iteratively, starting from initial values given by  {$\frac{\rho}{3\sqrt{m+n+r}} ({\small\mathbf{A}}_0, {\small\mathbf{B}}_0)$}, where the entries of $\small \mathbf{A}_0$ and  ${\small \mathbf{B}}_0$ are i.i.d. according to {$\mathcal{N}(0,\sigma_{1})$}, and $\rho$ is the initialization scaling factor. For an integer $j$ in the range $\{0,1,\ldots,\min\{r, m, n\}\}$, define the $j$-th condition number as {$\kappa_{j}=\frac{\sigma_{1}}{\sigma_{j}}$}. Finally, let {$\Vert\cdot\Vert_{\mathrm{op}}$} denote the operator norm. 

\begin{theorem}
\label{theorem}
    Suppose the assumptions stated in Appendix~\ref{app:proof} hold, where $\epsilon_1$ is a small constant. Let $\delta \in (0,1)$ be such that {$\delta\leq \min_{k\in\{1,\ldots,j\}} \frac{\sigma_{k}-\sigma_{k+1}}{\sigma_{k}}$}. Fix any tolerance level $\epsilon_2$ satisfying $\epsilon_2\leq \frac{1}{m+n+r}$. Let $\eta$ denote the learning rate for updating the matrices $\mathbf{A}$ and $\mathbf{B}$, and define $\Delta \mathbf{W}^{[:i]}$ as the rank-$i$ approximation of $\Delta \mathbf{W}^\star$, obtained by retaining the top-$i$ principal components.   

    Then, there exist some numerical constants $c$ and $c'$, and a sequence of iteration indices:
    \begin{equation*}
        i_1 \leq i_2 \leq \ldots\leq i_j\leq \frac{c'}{\delta\eta\sigma_{j}}\log \left(\frac{\kappa_j}{\delta\epsilon_2}\right)
    \end{equation*}
    such that, with high probability, gradient descent with step size $\eta\leq c \min\{\delta, 1-\delta\}\frac{\sigma_{j}^2}{\sigma_{1}^3}$ and initialization scaling factor $\rho \leq (\frac{c\delta\epsilon_2}{\kappa_j})^{\frac{1}{c\delta}}$ ensures that the approximation error satisfies
    \begin{equation}
    \left\|\mathbf{A}_{i_k}\mathbf{B}_{i_k}-\Delta \mathbf{W}^{[:k]}\right\|_{\mathrm{op}} \leq \epsilon_2\sigma_{1} +\epsilon_1,\quad \forall k=1,2,\ldots,j.
    \end{equation}
\end{theorem}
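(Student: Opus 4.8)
The plan is to reduce Theorem~\ref{theorem} almost entirely to the machinery of~\citep{jiang2023algorithmic}, which analyzes the trajectory of gradient descent on the unregularized matrix factorization objective $f(\mathbf{A},\mathbf{B}) = \frac{1}{2}\|\mathbf{A}\mathbf{B} - \Delta\mathbf{W}^\star\|_F^2$ (or the analogous loss implicit in the LoRA update). The key structural fact I would invoke is the \emph{incremental learning} (a.k.a.\ greedy / spectral) phenomenon: from a sufficiently small random initialization, gradient descent first fits the top singular component of $\Delta\mathbf{W}^\star$, then the second, and so on, so that at a well-chosen time $i_k$ the product $\mathbf{A}_{i_k}\mathbf{B}_{i_k}$ is close to the rank-$k$ truncation $\Delta\mathbf{W}^{[:k]}$. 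First I would set up notation to match that paper: identify $\Delta\mathbf{W}^\star_t$ with their target matrix, note that its singular values are $\sigma_1 \geq \cdots \geq \sigma_{\min\{m,n\}} \geq 0$, and record that the initialization $\frac{\rho}{3\sqrt{m+n+r}}(\mathbf{A}_0,\mathbf{B}_0)$ with $\mathcal{N}(0,\sigma_1)$ entries is exactly (up to the stated rescaling) the Gaussian small-init regime their theorem requires. The assumptions deferred to Appendix~\ref{app:proof} are precisely the hypotheses of their main convergence statement (balancedness or approximate balancedness of the factors, a spectral-gap condition governed by $\delta$, boundedness of the iterates, etc.), plus the $\epsilon_1$ slack that accounts for the fact that in the CL setting the loss landscape is not exactly a rank-constrained least-squares problem but an $\epsilon_1$-perturbation of one near the overlapping low-loss region.

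The main steps, in order, would be: (1) Invoke the step-size and initialization-scale conditions. The bound $\eta \leq c\min\{\delta,1-\delta\}\frac{\sigma_j^2}{\sigma_1^3}$ is the standard ``learning rate must be small relative to the smoothness $\sigma_1$ and must resolve the $j$-th singular value'' requirement; the bound $\rho \leq (c\delta\epsilon_2/\kappa_j)^{1/(c\delta)}$ is the ``initialization exponentially small in the desired accuracy'' requirement that makes the saddle-escape times well-separated. I would cite these directly from~\citep{jiang2023algorithmic}. (2) Extract the sequence of times $i_1 \leq i_2 \leq \cdots \leq i_j$ at which each successive singular direction has been learned to accuracy $\epsilon_2\sigma_1$, and carry through their estimate that the $k$-th such time is $O\!\big(\frac{1}{\delta\eta\sigma_k}\log\frac{\kappa_k}{\delta\epsilon_2}\big)$, which is dominated by the $j$-th one, $i_j \leq \frac{c'}{\delta\eta\sigma_j}\log\frac{\kappa_j}{\delta\epsilon_2}$, because $\sigma_j$ is the smallest singular value being resolved and $\kappa_j$ the largest condition number. (3) At time $i_k$, their analysis gives $\|\mathbf{A}_{i_k}\mathbf{B}_{i_k} - \Delta\mathbf{W}^{[:k]}\|_{\mathrm{op}} \leq \epsilon_2\sigma_1$ for the idealized least-squares target; I would then add the $\epsilon_1$ perturbation term via a triangle inequality, using that the true stationary point in the overlapping low-loss region differs from $\Delta\mathbf{W}^{[:k]}$ of the idealized problem by at most $\epsilon_1$ in operator norm (this is where the Appendix assumption bounding the landscape perturbation enters). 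Combining gives the claimed bound $\epsilon_2\sigma_1 + \epsilon_1$ for all $k = 1,\ldots,j$ simultaneously, on the same high-probability event (a union bound over the $j$ saddle-escape events, absorbed into ``with high probability''). (4) Finally I would remark that the spectral-gap hypothesis $\delta \leq \min_{k\leq j}\frac{\sigma_k - \sigma_{k+1}}{\sigma_k}$ is exactly what guarantees the $j$ phases are distinguishable, so that truncating at any level $k \leq j$ is meaningful.

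The main obstacle I anticipate is not any single inequality but the \emph{translation} from the clean matrix-sensing / matrix-factorization setting of~\citep{jiang2023algorithmic} to the SD-LoRA setting, where (a) the ``target'' $\Delta\mathbf{W}^\star$ is itself only defined implicitly as a point in an overlapping low-loss region rather than as a fixed matrix one is regressing onto, and (b) gradient descent is run on a downstream classification loss $\ell(f_{\bm\theta}(\cdot),\cdot)$, not on $\|\mathbf{A}\mathbf{B} - \Delta\mathbf{W}^\star\|_F^2$. Making this rigorous requires the Appendix assumptions to assert that, in a neighborhood of the overlapping low-loss region, the classification loss is well-approximated (gradient and Hessian) by a quadratic whose minimizer is $\Delta\mathbf{W}^\star$, with the approximation controlled by $\epsilon_1$; then the gradient-descent trajectory on the true loss shadows the trajectory on the surrogate, and the spectral incremental-learning behavior transfers with an $\epsilon_1$-sized error. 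I would flag this surrogate-tracking argument (a perturbed-gradient-descent / Gr\"onwall-type estimate over the $O(i_j)$ iterations) as the part needing the most care, since the error must be controlled to stay below $\epsilon_1$ uniformly over a horizon that grows like $\frac{1}{\delta\eta\sigma_j}\log\frac{\kappa_j}{\delta\epsilon_2}$ — which is exactly why the step size $\eta$ and scale $\rho$ are taken so conservatively. Everything else is bookkeeping on top of the cited theorem.
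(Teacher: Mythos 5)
Your plan follows the paper's route: cast the LoRA training dynamics as the matrix‑factorization problem $\tfrac12\|\mathbf{A}\mathbf{B}-\Delta\mathbf{W}_t^\star\|_F^2$, invoke the greedy/spectral incremental‑learning result of \citet{jiang2023algorithmic} (the paper uses their Propositions~B.2 and~B.5 via a block partition of $\mathbf{A},\mathbf{B}$ into $\mathbf{U},\mathbf{J},\mathbf{V},\mathbf{K}$ and the symmetrized variables $\mathbf{F},\mathbf{G},\mathbf{P},\mathbf{Q}$), and then pick up the extra $\epsilon_1$ by a single triangle inequality $\|\mathbf{A}\mathbf{B}-\Delta\mathbf{W}^\star\|_{\mathrm{op}} \le \|\mathbf{A}\mathbf{B}-\Delta\mathbf{W}_t^\star\|_{\mathrm{op}} + \|\Delta\mathbf{W}_t^\star-\Delta\mathbf{W}^\star\|_{\mathrm{op}}$, with the second term bounded by Assumption~1. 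So the overall architecture of your argument matches.

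Two clarifications worth registering. First, the appendix assumptions are more elementary than you conjectured: Assumption~1 simply posits $\|\Delta\mathbf{W}^\star-\Delta\mathbf{W}_t^\star\|_{\mathrm{op}}<\epsilon_1$ (the task‑specific optima are close to the shared one), and Assumption~2 is the distinct‑singular‑value / spectral‑gap condition; there is no explicit balancedness or landscape‑quadratic hypothesis in the paper's appendix. Second — and this is the substantive point — the ``surrogate‑tracking'' step you flag as the main obstacle (justifying that gradient descent on the downstream classification loss shadows gradient descent on the Frobenius‑norm surrogate) is \emph{not} carried out in the paper: the proof simply declares that sequential LoRA training ``can be conceptualized as a matrix approximation problem'' and then analyzes that surrogate. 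So your instinct that something more is needed there is sound, but if you want to reproduce the paper's proof you should take the surrogate as a modeling assumption rather than prove a perturbed‑GD/Gr\"{o}nwall estimate; the $\epsilon_1$ term in the theorem absorbs only the $\Delta\mathbf{W}_t^\star$‑to‑$\Delta\mathbf{W}^\star$ discrepancy, not the classification‑loss‑to‑quadratic discrepancy.
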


In Theorem~\ref{theorem}, we formulate the learning process of SD-LoRA as a matrix factorization problem, and prove that gradient descent with small initialization drives the learned product {$\mathbf{A}\mathbf{B}$} to approximate the principal components of {$\Delta \mathbf{W}^\star$}, \ie, {$\Delta \mathbf{W}^{[1]}, \Delta \mathbf{W}^{[:2]}, \ldots, \Delta \mathbf{W}^{[:j]}$}, sequentially. This theoretical insight explains the observed decreasing trend in the learned magnitudes, and further supports the feasibility of the subsequent parameter-efficient variants of SD-LoRA.

\subsection{Two Variants of SD-LoRA} 
\label{subsec:two}

Our analysis in Sec.~\ref{subsec: analysis} reveals an interesting behavior of SD-LoRA: LoRA directions acquired during learning on earlier tasks are heavily reused and contribute substantially to classification accuracy. In contrast, directions learned through later tasks primarily serve as minor refinements, exhibiting a diminishing utility. We leverage this behavior and propose two variants of SD-LoRA with improved parameter efficiency, which integrate rank reduction and knowledge distillation, termed SD-LoRA-RR and SD-LoRA-KD, respectively.

\textbf{SD-LoRA-RR.} To mitigate incremental parameter expansion, we implement an empirical rank-reduction strategy for the learnable matrices {$\mathbf{A}_t\in\mathbb{R}^{m\times r_t}$} and {$\mathbf{B}_t\in\mathbb{R}^{r_t\times n}$} associated with later tasks. Specifically, the rank $r_t$ is reduced in a stepwise manner:
\begin{align}\label{eq:rankp}
    r_1 = r_2=\ldots > r_{\mu} =r_{\mu+1}=\ldots>r_{\nu}=r_{\nu+1}=\ldots=r_{N},
\end{align}
where $\mu$ and $\nu$ are predefined task indices.  
The set of hyperparameters include $\{\mu,\nu, r_1, r_\mu, r_\nu\}$. This stepwise reduction ensures that later tasks, which contribute less, are encoded with lower-rank approximations, thereby curbing computational and memory overhead.

\textbf{SD-LoRA-KD.} While the rank-reduction strategy limits parameter expansion, each new task still grows parameters incrementally. 
To fully address this issue, we propose a knowledge distillation approach based on least squares. Our method evaluates whether a newly introduced LoRA direction { $\overline{\mathbf{A}_t\mathbf{B}_t}$} can be linearly represented by the subspace spanned by previously learned directions {$\{\overline{\mathbf{A}_k\mathbf{B}_k}\}_{k=1}^{t-1}$}. If a sufficient linear approximation exists, the fitting coefficients will be absorbed into existing learned magnitudes $\mathcal{M}$ rather than expanding the direction set $\mathcal{W}$. Formally, after training on $\mathcal{T}_t$, we solve the least squares optimization problem: 
\begin{align}
  \{\Delta\alpha_k\}_{k=1}^{t-1} = \argmin_{\{ \alpha'_k\}_{k=1}^{t-1}}\left\Vert\overline{\mathbf{A}_t\mathbf{B}_t} -\sum_{k=1}^{t-1}\alpha'_k\overline{\mathbf{A}_k\mathbf{B}_k}\right\Vert_F^2,  
\label{eqn:ls}
\end{align}
where $\Delta \alpha_k$ denotes the optimal coefficient for the $k$-th learned direction. If the fitting residual is less than a predefined threshold $\tau$, we assimilate the new direction by updating the first $t-1$ learned  magnitudes from $\mathcal{T}_{t}$ as
\begin{align}
  \bm h' = \left(\mathbf{W}_0 +\! (\alpha_1\!+\!\Delta\alpha_1)\overline{\mathbf{A}_1\mathbf{B}_1}+\!(\alpha_2\!+\!\Delta\alpha_2)\overline{\mathbf{A}_2\mathbf{B}_2}+\!\ldots\!+(\alpha_{t-1}+\Delta\alpha_{t-1}) \overline{\mathbf{A}_{t-1}\mathbf{B}_{t-1}} \right)\bm{x},
\end{align}
which prevents parameter expansion while preserving knowledge through coefficient fusion. The complete implementations of SD-LoRAs are detailed in Algorithm~\ref{alg:ES-LoRA}.

\begin{algorithm}[t]
\caption{SD-LoRA and its Variants on the Current Task $\mathcal{T}_t$} 
\begin{algorithmic}[1]
\Require Weight matrix from the foundation model $\mathbf{W}_0$, current task $\mathcal{T}_t$, learned LoRA directions from previous tasks {$\mathcal{W}=\{\overline{\mathbf{A}_k\mathbf{B}_k}\}_{k=1}^{t-1}$}, rank parameters $\{\mu,\nu,r_1,r_\mu,r_\nu\}$ in Eqn.~(\ref{eq:rankp}), residual threshold $\tau$ for Eqn.~(\ref{eqn:ls}), and maximum number of iterations $\mathrm{MaxIter}$.
\Ensure Sets of learned LoRA magnitudes $\mathcal{M}$ and directions $\mathcal{W}$.
\State Initialize $\mathbf{A}_t\in\mathbb{R}^{m\times r_t}$, $\mathbf{B}_t\in\mathbb{R}^{r_t\times n}$, and $\mathcal{M} = \{\alpha_k\}_{k=1}^t$
\IIf{$t = \mu$ or $\nu$}  \Comment{Only for SD-LoRA-RR}
  \SState{~~~~Reduce the lower dimension of $\mathbf{A}_t$ and $\mathbf{B}_t$ to $r_\mu$ or $r_\nu$}
\EEndIf
\For {\rre{$\mathrm{Iter}=0$ to $\mathrm{MaxIter}$}}
\State Compute the cross-entropy loss on the current task $\mathcal{T}_t$ using Eqn.~(\ref{eqn:se-lora})
\State Update $\{\alpha_k\}_{k=1}^t$ and $\overline{\mathbf{A}_t\mathbf{B}_t}$ by minimizing Eqn.~(\ref{eqn:lf}) using some stochastic optimizer
\EndFor
\State $\mathcal{W}\leftarrow \mathcal{W}\bigcup\{\overline{\mathbf{A}_t\mathbf{B}_t}\}$
\SStatee{Solve Problem (\ref{eqn:ls}) to obtain the optimal fitting coefficients $\{\Delta\alpha_k\}_{k=1}^{t-1}$\Comment{Only for SD-LoRA-KD}}
\IIff{the fitting residual$\left\Vert\overline{\mathbf{A}_t\mathbf{B}_t} -\sum_{k=1}^{t-1}\Delta\alpha_k\overline{\mathbf{A}_k\mathbf{B}_k}\right\Vert_F \le \tau$} 
    \SStatee{~~~~$\mathcal{M} \leftarrow \{\alpha_k +\Delta\alpha_k\}_{k=1}^{t-1}$ and $\mathcal{W}\leftarrow \{\overline{\mathbf{A}_k\mathbf{B}_k}\}_{k=1}^{t-1}$}
\end{algorithmic}
\label{alg:ES-LoRA}
\end{algorithm}

\section{Experiments}
\label{sec:exp}
In this section, we first present the experimental setups, and then compare SD-LoRAs with state-of-the-art CL methods across multiple benchmarks and foundation models.
\subsection{Experimental Setups}
\textbf{Evaluation Benchmarks and Protocols.} Following~\citep{gao2023unified, liang2024inflora}, we evaluate SD-LoRAs on three standard CL benchmarks: ImageNet-R~\citep{boschini2022transfer}, ImageNet-A~\citep{hendrycks2021nae}, and DomainNet~\citep{peng2019moment}. Specifically, ImageNet-R consists of $200$ ImageNet classes~\citep{deng2009imagenet} rendered in artistic styles.
ImageNet-A features $200$ classes with natural adversarial examples, often misclassified by standard ImageNet-trained models. DomainNet includes $345$ classes across six distinct domains.
As common practices~\citep{liang2024inflora, huang2024ovor}, we split ImageNet-R into 5/10/20 tasks (40/20/10 classes per task), ImageNet-A into 10 tasks (20 classes each), and DomainNet into 5 tasks (69 classes each). Additionally, we include CIFAR100~\citep{krizhevsky2009learning} and CUB200~\citep{WahCUB_200_2011} results in Appendix~\ref{supp:other-cl-results}.


We adopt two standard and widely used CL metrics: Average accuracy ({$\textrm{Acc}$}) and average anytime accuracy ({$\textrm{AAA}$}). The {$\textrm{Acc}$} metric measures the overall performance by computing the average accuracy across all $N$ tasks upon the completion of CL. {$\textrm{AAA}$} further accumulates the average accuracy of all encountered tasks after training on each new task.


\begin{table*}[t]
\caption{Performance comparison on ImageNet-R across different task lengths.}
\begin{center}
\scalebox{0.90}{ 
\begin{tabular}{c|cc|cc|cc}
\toprule
 \multicolumn{1}{l|}{\multirow{2}{*}{Method}} & \multicolumn{2}{c|}{ImageNet-R ($N=5$)}  & \multicolumn{2}{c|}{ImageNet-R ($N=10$)} & \multicolumn{2}{c}{ImageNet-R ($N=20$)}  \\ \cline{2-7} 
        &$\textrm{Acc}$ $\uparrow$   & $\textrm{AAA} $$\uparrow$    & $\textrm{Acc}$ $\uparrow$    & $\textrm{AAA}$ $\uparrow$    & $\textrm{Acc}$ $\uparrow$  & $\textrm{AAA}$ $\uparrow$  \\
\hline
\multicolumn{1}{l|}{\multirow{1}{*}{Full Fine-Tuning}} & \ms{64.92}{0.87} & \ms{75.57}{0.50} & \ms{60.57}{1.06} & \ms{72.31}{1.09} & \ms{49.95}{1.31} & \ms{65.32}{0.84} \\

\multicolumn{1}{l|}{\multirow{1}{*}{L2P}} &  \ms{73.04}{0.71} & \ms{76.94}{0.41} & \ms{71.26}{0.44} & \ms{76.13}{0.46} & \ms{68.97}{0.51} & \ms{74.16}{0.32} \\

\multicolumn{1}{l|}{\multirow{1}{*}{DualPrompt}} & \ms{69.99}{0.57} & \ms{72.24}{0.41} & \ms{68.22}{0.20} & \ms{73.81}{0.39} & \ms{65.23}{0.45} & \ms{71.30}{0.16} \\

\multicolumn{1}{l|}{\multirow{1}{*}{CODA-Prompt}} &  \ms{76.63}{0.27} & \ms{80.30}{0.28} & \ms{74.05}{0.41} & \ms{78.14}{0.39} & \ms{69.38}{0.33} & \ms{73.95}{0.63} \\

\multicolumn{1}{l|}{\multirow{1}{*}{HiDe-Prompt}} & \ms{74.77}{0.25} & \ms{78.15}{0.24} & \ms{74.65}{0.14} & \ms{78.46}{0.18} & \ms{73.59}{0.19} & \ms{77.93}{0.19} \\

\multicolumn{1}{l|}{\multirow{1}{*}{InfLoRA}} & \ms{76.95}{0.23} & \ms{81.81}{0.14} & \ms{74.75}{0.64} & \ms{80.67}{0.55} & \ms{69.89}{0.56} & \ms{76.68}{0.57} \\

\cline{1-7}
\multicolumn{1}{l|}{\multirow{1}{*}{SD-LoRA}} & \ms{\textbf{79.15}}{0.20} & \ms{\textbf{83.01}}{0.42} & \ms{\textbf{77.34}}{0.35} & \ms{\textbf{82.04}}{0.24} & \ms{\textbf{75.26}}{0.37} & \ms{80.22}{0.72} \\

\multicolumn{1}{l|}{\multirow{1}{*}{SD-LoRA-RR}} & \ms{79.01}{0.26} & \ms{82.50}{0.38} & \ms{77.18}{0.39} & \ms{81.74}{0.24}  & \ms{74.05}{0.51} & \ms{\textbf{80.65}}{0.35} \\ 

\multicolumn{1}{l|}{\multirow{1}{*}{SD-LoRA-KD}} & \ms{78.85}{0.29} & \ms{82.47}{0.58} & \ms{77.03}{0.67} & \ms{81.52}{0.26}  & \ms{74.12}{0.66} & \ms{80.11}{0.75} \\ 
\bottomrule
\end{tabular}
}
\end{center}

\label{exp:tab-inr}
\end{table*}

\textbf{Competing Methods and Implementation Details.} We compare SD-LoRAs against state-of-the-art ViT-based CL methods, including L2P~\citep{wang2022learning}, DualPrompt~\citep{wang2022dualprompt}, CODA-Prompt~\citep{smith2023coda}, HiDe-Prompt~\citep{wang2024hierarchical}, and InfLoRA~\citep{liang2024inflora}. We also incorporate full fine-tuning as a form of performance lower bound. Following prior work~\citep{gao2023unified,huang2024ovor}, we employ ViT-B/16~\citep{dosovitskiy2020image}, pre-trained on ImageNet-21K and fine-tuned on ImageNet-1K as the foundation model for classification. We also experiment with a self-supervised ViT-B/16 from DINO~\citep{caron2021emerging}. The SD-LoRA components are inserted into the attention layers of all Transformer blocks, modifying the query and value projections, with a fixed rank of $r_1=10$. It is noteworthy that we utilize a shared set of LoRA magnitudes for all projections, each with different LoRA directions.  For SD-LoRA-RR, we set the additional rank parameters as $\mu=4$, $\nu = 8$, $r_\mu = 8$, and $r_\nu = 6$. For SD-LoRA-KD, we set the threshold for the fitting residual to $\tau = 9\times 10^{-4}$.
For all methods, training is carried out by Adam~\citep{kingma2014adam} with a learning rate of $0.008$ and a minibatch size of $128$ for $30$ epochs on ImageNet-R, $10$ epochs on DomainNet, and $20$ epochs on all other datasets. We report mean results across five runs with standard errors.

\begin{table*}[t]
\caption{Performance comparison on ImageNet-A and DomainNet across different task lengths.}
\begin{center}
\scalebox{0.90}{ 
\begin{tabular}{c|cc|cc}
\toprule
 \multicolumn{1}{l|}{\multirow{2}{*}{Method}} & \multicolumn{2}{c|}{ImageNet-A ($N=10$)} & \multicolumn{2}{c}{DomainNet ($N=5$)}  \\ 
 \cline{2-5} 
     & $\textrm{Acc}$ $\uparrow$   & $\textrm{AAA}$ $\uparrow$   &$\textrm{Acc}$ $\uparrow$ &$\textrm{AAA}$ $\uparrow$ \\
\hline
\multicolumn{1}{l|}{\multirow{1}{*}{Full Fine-Tuning}} & \ms{16.31}{7.89} & \ms{30.04}{13.18} & \ms{51.46}{0.47} & \ms{67.08}{1.13} \\

\multicolumn{1}{l|}{\multirow{1}{*}{L2P~\citep{wang2022learning}}} &   \ms{42.94}{1.27} & \ms{51.40}{1.95} & \ms{70.26}{0.25} & \ms{75.83}{0.98} \\

\multicolumn{1}{l|}{\multirow{1}{*}{DualPrompt~\citep{wang2022dualprompt}}} & \ms{45.49}{0.96} & \ms{54.68}{1.24} & \ms{68.26}{0.90} & \ms{73.84}{0.45}\\

\multicolumn{1}{l|}{\multirow{1}{*}{CODA-Prompt~\citep{smith2023coda}}} & \ms{45.36}{0.78} & \ms{57.03}{0.94} & \ms{70.58}{0.53} & \ms{76.68}{0.44} \\

\multicolumn{1}{l|}{\multirow{1}{*}{HiDe-Prompt~\citep{wang2024hierarchical}}} & \ms{42.70}{0.60} & \ms{56.32}{0.40} &  \ms{72.20}{0.08} & \ms{77.01}{0.04}  \\

\multicolumn{1}{l|}{\multirow{1}{*}{InfLoRA~\citep{liang2024inflora}}} & \ms{49.20}{1.12} & \ms{60.92}{0.61} & \ms{71.59}{0.23} & \ms{78.29}{0.50} \\

\cline{1-5}
\multicolumn{1}{l|}{\multirow{1}{*}{SDLoRA}} & \ms{\textbf{55.96}}{0.73} & \ms{\textbf{64.95}}{1.63} & \ms{\textbf{72.82}}{0.37} & \ms{\textbf{78.89}}{0.50} \\

\multicolumn{1}{l|}{\multirow{1}{*}{SD-LoRA-RR}}  & \ms{55.59}{1.08} & \ms{64.59}{1.91} & \ms{72.58}{0.40} & \ms{78.79}{0.78} \\

\multicolumn{1}{l|}{\multirow{1}{*}{SD-LoRA-KD}} & \ms{54.24}{1.12} & \ms{63.89}{0.58} & \ms{72.15}{0.50} & \ms{78.44}{0.66} \\

\bottomrule
\end{tabular}
}
\end{center}
\label{exp:tab-ina}
\end{table*}

\begin{figure*}[t]
\centering
\includegraphics[width=0.98 \textwidth]{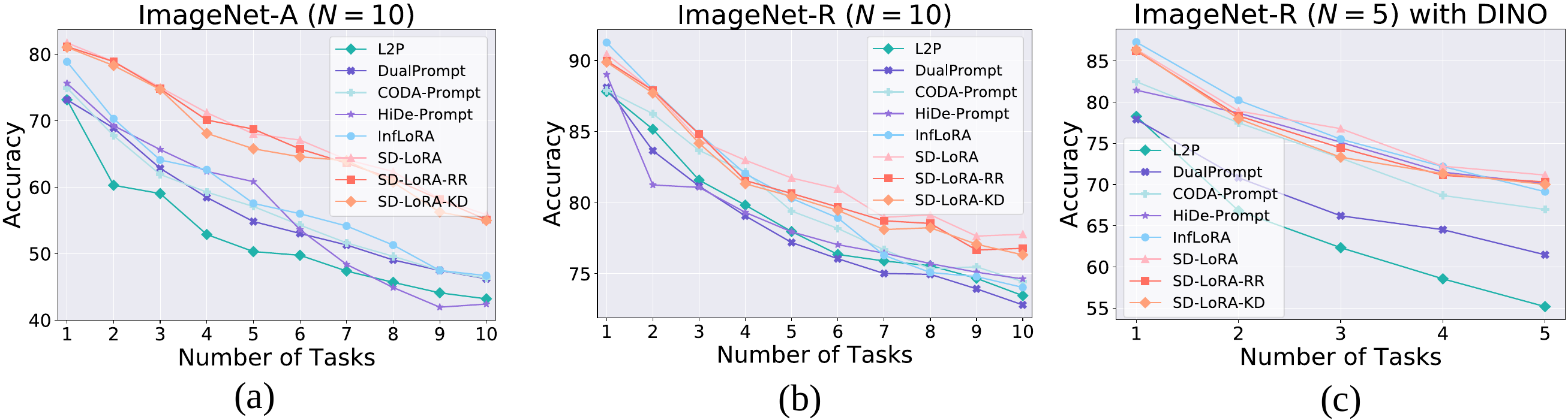}
\caption{Average accuracy during sequential training on \textbf{(a)} ImageNet-A ($N=10$), \textbf{(b)} ImageNet-R ($N=10$), and \textbf{(c)} ImageNet-R ($N=5$) using ViT-B/16 from DINO~\citep{caron2021emerging}. }
\label{fig:Exp-AAA}
\end{figure*}

\subsection{Experimental Results}
\textbf{Results on Different CL benchmarks Using Different Backbones.} In Tables~\ref{exp:tab-inr} and~\ref{exp:tab-ina}, it is clear that SD-LoRA achieves significant improvements over existing methods. Specifically, on ImageNet-R ($N=20$), SD-LoRA 
surpasses InfLoRA by margins of  $7.68\%$ in $\textrm{Acc}$ and  $4.62\%$ in $\textrm{AAA}$. Similarly, on ImageNet-A, SD-LoRA outperforms HiDe-prompt by approximately $31.05\%$ in $\textrm{Acc}$ and  $15.32\%$ in $\textrm{AAA}$. Even on the more complex DomainNet, comprising six distinct domains, SD-LoRA consistently attains the best performance. 
To demonstrate the generality of SD-LoRA, we also evaluate it using the self-supervised ViT-B/16 from DINO. The results in Fig.~\ref{fig:Exp-AAA}(c) show that SD-LoRA continues to deliver superior performance under both $\textrm{Acc}$ and $\textrm{AAA}$ using different backbones. Finally, we observe that the two variants SD-LoRA-RR and SD-LoRA-KD, exhibit only marginal performance degradations relative to the full SD-LoRA model, confirming the effectiveness of their parameter-efficient designs.

\textbf{Results across Varied Task Lengths.} To evaluate the scalability and generalizability of SD-LoRAs under different task lengths, we follow~\citep{liang2024inflora, huang2024ovor} and partition ImageNet-R into $5$, $10$, and $20$ sequential tasks containing $40$, $20$, and $10$ classes per task, respectively. As shown in Table~\ref{exp:tab-inr}, SD-LoRAs demonstrate consistent superiority over existing methods, with performance margins growing as the number of tasks. This underscores the suitability of SD-LoRAs for scenarios requiring resource-efficient CL without compromising accuracy.


\textbf{Ablation Studies.} To validate the contributions of design choices in the proposed SD-LoRA, we conduct a series of ablation experiments, with quantitative results summarized in Table~\ref{tab:ablation}. First, we fix the singly learned LoRA direction while allowing its magnitude to adapt during training.  This simplified configuration already achieves nontrivial performance, highlighting the critical role of the initial LoRA direction in CL. Second, we decouple the magnitude and direction learning, but restrict the classifier to a single LoRA component. The inferior performance relative to SD-LoRA suggests that the performance gains of SD-LoRA cannot be attributed solely to decoupling. Instead, the synergistic effect of training multiple decoupled LoRA components is essential for achieving satisfactory results. Last, we fix learned LoRA components without magnitude rescaling, and also observe a noticeable performance decline. This suggests that the rescaling mechanism enables SD-LoRA to navigate low-loss paths by reweighting contributions from earlier components.  

\textbf{Analysis of Computation, Parameter, and Storage Efficiency.} As presented in Table~\ref{tab:flop}, we compare the inference computations in terms of GFLOPs, trainable parameters, and feature storage requirements of various CL methods. Notably, InfLoRA~\citep{liang2024inflora} and the proposed SD-LoRA eliminate the need for task-specific prompt selection during inference, thereby enjoying the highest inference efficiency. Moreover, our proposed SD-LoRA-RR is capable of further reducing the number of LoRA parameters without reliance on sample rehearsal, making it an ideal choice for resource-constrained CL scenarios. 

\begin{table*}[t]
\caption{Ablation analysis of the proposed SD-LoRA. Trainable parameters are highlighted in orange.}  
\begin{center}
\scalebox{0.90}{ 
\begin{tabular}{c|cc|cc}
\toprule
 \multicolumn{1}{l|}{\multirow{2}{*}{Training Strategy}} & \multicolumn{2}{c|}{ImageNet-R ($N=5$)}  & \multicolumn{2}{c}{ImageNet-R ($N=10$)} \\ \cline{2-5} 
        & $\textrm{Acc}$ $\uparrow$  & $\textrm{AAA}$ $\uparrow$ & $\textrm{Acc}$ $\uparrow$   & $\textrm{AAA}$ $\uparrow$   \\
\midrule
\multicolumn{1}{l|}{\multirow{1}{*}{{$\mathbf{W}_0+ \textcolor{orange}{\alpha} \overline{\mathbf{A}_1\mathbf{B}_1}$}}} & \ms{78.17}{0.27} & \ms{81.93}{0.51} & \ms{74.82}{0.96} & \ms{80.63}{0.63}  \\

\multicolumn{1}{l|}{\multirow{1}{*}{{$\mathbf{W}_0 + \textcolor{orange}{\alpha\overline{\mathbf{A}\mathbf{B}}}$}}} &  \ms{73.24}{0.31} & \ms{78.80}{0.13} & \ms{70.62}{0.78} & \ms{76.32}{0.16}  \\

\multicolumn{1}{l|}{\multirow{1}{*}{{$\mathbf{W}_0 + \overline{\mathbf{A}_1\mathbf{B}_1}+\ldots+\textcolor{orange}{\alpha\overline{\mathbf{A}_{t}\mathbf{B}_{t}}} $}     }} & \ms{78.28}{0.59} & \ms{82.02}{0.71} & \ms{74.29}{0.32} & \ms{79.74}{0.71} \\

\multicolumn{1}{l|}{\multirow{1}{*}{{$\mathbf{W}_0 +\textcolor{orange}{\alpha_1}\overline{\mathbf{A}_1\mathbf{B}_1} +\ldots +\textcolor{orange}{\alpha_t \overline{\mathbf{A}_t\mathbf{B}_t}}$} (SD-LoRA)}} & \ms{\textbf{79.15}}{0.20} & \ms{\textbf{83.01}}{0.42} & \ms{\textbf{77.34}}{0.35} & \ms{\textbf{82.04} }{0.24} \\

\bottomrule
\end{tabular}
}
\end{center}
\label{tab:ablation}

\end{table*}

\begin{table*}[t]
\caption{Comparison on  ImageNet-R ($N=20$) in terms of computation (GFLOPs), parameter, and storage efficiency.}
\begin{center}
\scalebox{0.90}{ 
\begin{tabular}{c|c|c|c}
\toprule
  \multicolumn{1}{l|}{Method} & \multicolumn{1}{c|}{GFLOPs } & \multicolumn{1}{c|}{\makecell{Learnable \\ Parameters (M)} } & \multicolumn{1}{c}{\makecell{Stored \\ Features (M)}}  \\ 

\hline

\multicolumn{1}{l|}{\multirow{1}{*}{L2P~\citep{wang2022learning}}} &  70.14 & 0.48 & 0 \\

\multicolumn{1}{l|}{\multirow{1}{*}{DualPrompt~\citep{wang2022dualprompt}}} & 70.26 & 0.06 & 0 \\

\multicolumn{1}{l|}{\multirow{1}{*}{CODA-Prompt~\citep{smith2023coda}}} & 70.61 & 0.38 & 0 \\

\multicolumn{1}{l|}{\multirow{1}{*}{HiDe-Prompt~\citep{wang2024hierarchical}}} & 70.36 & 0.08 & 0.15 \\

\multicolumn{1}{l|}{\multirow{1}{*}{InfLoRA~\citep{liang2024inflora}}} & 35.12 & 0.37 & 0.10 \\
\hline
\multicolumn{1}{l|}{\multirow{1}{*}{\rre{SD-LoRA}}} & \rre{35.12} & \rre{0.37} & \rre{0}\\
\multicolumn{1}{l|}{\multirow{1}{*}{\rre{SD-LoRA-RR}}} & \rre{35.12} & \rre{0.23} & \rre{0}\\
   
\bottomrule
\end{tabular}
}
\end{center}
\label{tab:flop}

\end{table*}
\section{Conclusion and Discussion}
In this paper, we have introduced SD-LoRA, a computational method designed to address scalability challenges in class-incremental learning with foundation models. By decoupling the learning of magnitude and direction of LoRA components, SD-LoRA provides a rehearsal-free, inference-efficient, and end-to-end optimized solution. Our empirical and theoretical analysis demonstrates that SD-LoRA uncovers a low-loss trajectory that converges to an overlapping low-loss region for all learned tasks, effectively balancing stability and plasticity. Extensive experiments confirmed the effectiveness of SD-LoRA in mitigating catastrophic forgetting while maintaining adaptability to new tasks. Additionally, our two parameter-efficient variants, SD-LoRA-RR and SD-LoRA-KD, further enhance its practicality for resource-constrained applications.

While SD-LoRA has shown promise, several avenues for future research warrant exploration. First, extending SD-LoRA to other foundation models beyond ViTs could provide valuable insights into its generality and effectiveness across different backbone architectures. Second, integrating SD-LoRA with other PEFT techniques, such as adapters or prefix-tuning, may further enhance its performance and scalability. Finally, developing more theoretically grounded strategies for rank reduction and knowledge distillation within SD-LoRA could lead to additional improvements in parameter efficiency and overall performance.

\section*{Acknowledgements}
We would like to thank Ziye Ma and Xinyuan Song for helping formalize the proofs, and Xuelin Liu for assistance with the plots and diagrams. This work was supported in part by the National Key R\&D Program of China (2020YFA0713900), the Hong Kong RGC General Research Fund (11220224), the CityU Applied Research Grant (9667264), the National Natural Science Foundation of China under the Tianyuan Fund for Mathematics (12426105) and under Grant 62306233, and the Major Key Project of PCL (PCL2024A06).

\bibliography{iclr2025_conference}
\bibliographystyle{iclr2025_conference}

\clearpage
\appendix
\section{Appendix}
\renewcommand{\theequation}{A.\arabic{equation}}
\subsection{Proof of Theorem~\ref{theorem}}
\label{app:proof}
In this section, we prove Theorem~\ref{theorem} as a specific case of Theorem~\ref{theorem-full}.

\textbf{Assumption 1.} The optimal updates {$\Delta \mathbf{W}_t^\star, t\in\{1,2,\ldots, N\}$} lie within a small neighborhood in the loss landscape, meaning there exists a small constant $\epsilon_1>0$ such that $\|  \Delta \mathbf{W}^\star- \Delta \mathbf{W}^\star_t\|_{\mathrm{op}}<\epsilon_1$.

\textbf{Assumption 2.} The first $j+1$ singular values of {$\Delta \mathbf{W}_t^\star$} are distinct, \ie, $\sigma_{1}>\ldots>\sigma_{j}>\sigma_{j+1}$.


\begin{theorem}
    Fix any $j\leq r$. Suppose $\sigma_{j+1}<\sigma_{j}$, choose any $\gamma\in(0,1)$ such that $\frac{\sigma_{j+1}}{\sigma_{j}}\leq \gamma$. Pick any stepsize $\eta\leq \min\{\frac{\gamma\sigma_{j}^2}{600\sigma^3_{1}}, \frac{(1-\gamma)\sigma_{j}}{20\sigma_{j}^2}\}$. For any $c_\rho <1$, let the initialization size $\rho$ satisfy
    \begin{equation*}
        \rho \leq \min\left\{ \frac{1}{3}, \frac{1-\gamma}{24}, \frac{c_\rho\sigma_{1}}{12(m+n+r)\sqrt{\frac{1-\gamma}{24}\sqrt{\sigma_{j}}}} \right\},
    \end{equation*}
    and 
    \begin{equation*}
        \rho \leq \min \left\{ \left(\frac{(1-\gamma)c_\rho\sigma_{j}}{1200(m+n+r)j\sigma_{1}}\right)^{\frac{2(1+\gamma)}{1-\gamma}}, \left(\frac{\gamma\sigma^2_{j}}{16000r\sigma_{j}^2}\right)^{\frac{1+\gamma}{1-\gamma}}, \frac{\gamma\sigma_{j}\sqrt{2j}}{16\sigma_{j}\sqrt{m+n+r}}  \right\}
    \end{equation*}
    Define 
    \begin{equation*}
    \begin{split}
        &T_1 = \left\lfloor \frac{\log(\frac{12(m+n+r)\sqrt{\frac{1-\gamma}{24}}\sqrt{\sigma_{j}}}{c_\rho\rho\sqrt{\sigma_{j}}})}{\log(1+\frac{1+\gamma}{2}\eta\sigma_{j})} \right\rfloor +1, 
        \qquad T_2 = \left\lfloor \frac{\log(\sqrt{\frac{24}{1-\gamma}})}{\log(1+0.1\eta\sigma_{j})} \right\rfloor +1, \\
        &T_3 = \left\lfloor \frac{\log(\rho^{\frac{1-\gamma}{2(1+\gamma)}}/3)}{\log(1-\frac{3}{2}\eta\sigma_{j})} \right\rfloor +1,  \qquad T=\left\lfloor \frac{\log(\rho^{\frac{1-\gamma}{2(1+\gamma)}}/\rho)}{\log(1+\gamma\eta\sigma_{j})}    \right\rfloor
    \end{split}
    \end{equation*}
    Define $T_0 := T_1+T_2+T_3 $, then we have
    \begin{equation*}
        \frac{T_0}{T} \leq 1- \frac{(3-2\gamma)(1-\gamma)}{6(3\gamma+1)}.
    \end{equation*}
    Furthermore, there exists a universal constant $C$ such that with probability at least $1-(Cc_\rho)^{r-j+1}-C\exp(-r/C)$, for all $T_0\leq i\leq T$, we have
    \begin{equation}
        \|\mathbf{A}_i\mathbf{B}_i - \Delta \mathbf{W}^{[:j]} \|_{\mathrm{op}} \leq 8\rho^{\frac{\delta}{2(2-\sigma)}}\sigma_{1} + 4\rho^{\frac{\delta}{2(2-\delta)}}\sqrt{2j}\sigma_{1} +\epsilon_1.
    \end{equation}
\label{theorem-full}
\end{theorem}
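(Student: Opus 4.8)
The plan is to recognize the statement as an instance of the incremental (spectral) learning phenomenon for gradient descent on asymmetric matrix factorization with vanishingly small initialization, reduce it to the analysis of \citet{jiang2023algorithmic}, and supply one extra ingredient---Assumption~1---to bridge $\Delta\mathbf{W}_t^\star$ and $\Delta\mathbf{W}^\star$. First I would observe that, in the regime of interest, updating the LoRA factors $(\mathbf{A},\mathbf{B})$ to minimize the task-$t$ loss amounts to running gradient descent on the surrogate $\mathcal{L}(\mathbf{A},\mathbf{B})=\tfrac12\|\mathbf{A}\mathbf{B}-\Delta\mathbf{W}_t^\star\|_F^2$, whose relevant rank-$\le r$ stationary target is the rank-$j$ truncation $\Delta\mathbf{W}_t^{\star[:j]}$. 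Hence it suffices to prove the conclusion with $\Delta\mathbf{W}^{[:j]}$ replaced by $\Delta\mathbf{W}_t^{\star[:j]}$; the passage back to $\Delta\mathbf{W}^{[:j]}$ then follows from Assumption~1, $\|\Delta\mathbf{W}^\star-\Delta\mathbf{W}_t^\star\|_{\mathrm{op}}<\epsilon_1$, together with a Weyl / Davis--Kahan perturbation bound that uses the singular gaps of Assumption~2 to control $\|\Delta\mathbf{W}_t^{\star[:j]}-\Delta\mathbf{W}^{[:j]}\|_{\mathrm{op}}$ by a fixed multiple of $\epsilon_1$, which is absorbed into the stated $+\epsilon_1$.

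The core is a spectral phase analysis. Diagonalize through the SVD $\Delta\mathbf{W}_t^\star=U\Sigma V^\top$ and track, along the trajectory, the projections of $\mathbf{A}_i$ and $\mathbf{B}_i$ onto the top-$j$ singular subspaces versus the orthogonal tail, as well as the imbalance $\mathbf{A}_i^\top\mathbf{A}_i-\mathbf{B}_i\mathbf{B}_i^\top$ (keeping this small is the extra complication of the asymmetric, as opposed to symmetric, factorization). The argument splits into the three stages whose lengths $T_1,T_2,T_3$ appear in the statement: (i) an escape/growth stage in which each top-$k$ signal component is multiplied by roughly $1+\tfrac{1+\gamma}{2}\eta\sigma_j$ per step while the tail stays of order $\rho^{\Theta(1/(1-\gamma))}$; (ii) a saturation stage in which the signal directions settle near $\sqrt{\Sigma}$ on the top-$j$ block; and (iii) a local-convergence stage in which, having entered a benign basin, GD contracts to $\Delta\mathbf{W}_t^{\star[:j]}$ at rate about $1-\tfrac32\eta\sigma_j$. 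Stitching the three stages and carrying the induction across the stage boundaries yields, for every $i\in[T_0,T]$, an operator-norm error of order $\rho^{\delta/(2(2-\delta))}\sigma_1(1+\sqrt{2j})$, which is the sum of the first two terms of the claimed bound.

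The time-scale inequality $T_0/T\le 1-\tfrac{(3-2\gamma)(1-\gamma)}{6(3\gamma+1)}$ is a direct if laborious calculation: substitute the four displayed floor expressions for $T_1,T_2,T_3,T$, use $\log(1+x)\le x$ together with $\log(1+x)\ge x/(1+x)$ (and $\log(1-x)\le -x$) for the small quantities of the form $\eta\sigma_j$, and simplify the resulting ratios; the small-stepsize hypothesis is exactly what makes these estimates go through. The probabilistic claim is separated out: the success event requires (a) every one of the top-$j$ singular directions to receive enough initial signal, an anti-concentration bound on the least singular value of the relevant $(r-j+1)$-dimensional Gaussian block that contributes the $(Cc_\rho)^{r-j+1}$ factor, and (b) the whole random initialization not being too large, which is standard Gaussian-norm concentration and contributes $C\exp(-r/C)$; a union bound over these and the $O(1)$ bad events per stage gives the stated probability.

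I expect the main obstacle to be controlling the tail directions (singular indices $>j$) and the imbalance term uniformly over the entire window $[T_0,T]$, which is exponentially long in $1/(\eta\sigma_j)$: these quantities do grow geometrically, and one must show that the two displayed upper bounds on the initialization size $\rho$ keep them small enough that the rank-$j$ approximation is never corrupted, while simultaneously ensuring the inductive invariants close at each stage transition. Everything else is bookkeeping on top of the matrix-factorization analysis of \citet{jiang2023algorithmic}.
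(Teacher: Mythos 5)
Your proposal takes essentially the same route as the paper: formulate the per-task update as gradient descent on the surrogate $\tfrac12\|\mathbf{A}\mathbf{B}-\Delta\mathbf{W}_t^\star\|_F^2$, diagonalize by SVD, partition into a top-$j$ block and a tail block, track the imbalance between the two factors, and invoke the three-stage incremental-learning analysis of \citet{jiang2023algorithmic} (their Propositions B.2 and B.5) for the stage lengths, the spectral-tail and imbalance bounds, and the probability estimates, finally absorbing the mismatch between $\Delta\mathbf{W}_t^\star$ and $\Delta\mathbf{W}^\star$ via Assumption~1 and the triangle inequality. The paper's only presentational difference is that it parametrizes the imbalance through $\mathbf{F}=\tfrac12(\mathbf{U}+\mathbf{V}^\intercal)$, $\mathbf{G}=\tfrac12(\mathbf{U}-\mathbf{V}^\intercal)$ and the auxiliary matrices $\mathbf{P},\mathbf{Q}$, whereas you phrase the same control in terms of $\mathbf{A}_i^\top\mathbf{A}_i-\mathbf{B}_i\mathbf{B}_i^\top$; these are equivalent. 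One place where you are actually more careful than the paper: you explicitly note that moving from $\Delta\mathbf{W}_t^{\star[:j]}$ to $\Delta\mathbf{W}^{[:j]}$ requires a Weyl/Davis--Kahan-type perturbation argument using the gaps from Assumption~2, whereas the paper applies the triangle inequality only to the full matrices $\|\Delta\mathbf{W}_t^\star-\Delta\mathbf{W}^\star\|_{\mathrm{op}}\le\epsilon_1$ and then silently replaces $\Delta\mathbf{W}^\star$ by its rank-$j$ truncation in the final display; your extra step is what makes that substitution legitimate.
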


\begin{proof}
The sequential training of LoRA can be conceptualized as a matrix approximation problem~\citep{jiang2023algorithmic}:
\begin{align}\label{eq:alf}
    \ell(\mathbf{A},\mathbf{B}) = \frac{1}{2}\|\mathbf{A}\mathbf{B} - \Delta\mathbf{W}_t^\star\|_F^2,
\end{align}
where $\Delta\mathbf{W}_t^\star$ denotes the optimal update matrix for the current $t$-th task. We compute the gradients of $\ell$
with respect to  $\mathbf{A}$ and $\mathbf{B}$, respectively:
$$
    \nabla_{\mathbf{A}}\ell = (\mathbf{A}\mathbf{B} - \Delta\mathbf{W}_t^\star)\mathbf{B}^\intercal
$$
$$
    \nabla_{\mathbf{B}}\ell = \mathbf{A}^\intercal(\mathbf{A}\mathbf{B} - \Delta\mathbf{W}_t^\star).
$$
Then, the gradient descent updates with a step size of $\eta$ are
$$
\mathbf{A}_+ = \mathbf{A} - \eta \nabla_{\mathbf{A}}\ell = \mathbf{A} + \eta(\Delta\mathbf{W}_t^\star - \mathbf{A}\mathbf{B})\,\mathbf{B}^\intercal,
$$
$$
\mathbf{B}_+ = \mathbf{B} - \eta \nabla_{\mathbf{B}}\ell = \mathbf{B} + \eta\mathbf{A}^\intercal(\mathbf{A}\mathbf{B} - \Delta\mathbf{W}_t^\star).
%
$$
By performing the singular value decomposition (SVD) of $\Delta \mathbf{W}_t^\star$, \ie,
$$
\Delta\mathbf{W}_t^\star = \mathbf{\Phi}_t \,\mathbf{\Sigma}_{t} \,\mathbf{\Psi}_t^\intercal,
$$
and exploiting the rotational invariance of the Frobenius norm in Eqn.~(\ref{eq:alf}), \ie, by substituting $\mathbf{A} \to \mathbf{\Phi_t}^\intercal\mathbf{A}$ and $\mathbf{B}\to \mathbf{\Psi}_t^\intercal\mathbf{B}$, we may assume without loss of generality that $\Delta\mathbf{W}_t^\star$ is diagonal. 

We then rewrite $\Delta \mathbf{W}_t^\star$ as
$$
\Delta\mathbf{W}_t^\star = \begin{pmatrix}\bm \Sigma_t & \bm{0} \\[1mm] \bm{0} & {\bm \Sigma'_t}\end{pmatrix}
$$
with
$
\bm\Sigma_t=\operatorname{diag}(\sigma_{1},\ldots,\sigma_{j})\in\mathbb{R}^{j\times  j}$ and ${\bm\Sigma}'_t\in\mathbb{R}^{(m -j)\times (n-j)}$ be a diagonal matrix with $\sigma_{j+1}, \ldots,\sigma_{\min\{m,n\}}$ on the diagonals. We next introduce the following block partitions:
$$
\mathbf{A} = \begin{pmatrix} \mathbf{U}\\[1mm] \mathbf{J} \end{pmatrix}\quad\text{and}\quad  \mathbf{B} = \begin{pmatrix} \mathbf{V} ~~ \mathbf{K} \end{pmatrix},
$$
where 
$$
\mathbf{U}\in\mathbb{R}^{j\times r},\quad \mathbf{J}\in\mathbb{R}^{(m-j)\times r},\quad 
\mathbf{V}\in\mathbb{R}^{r\times j},\quad \mathbf{K}\in\mathbb{R}^{r\times (n-j)}.
$$
In this way, we can decompose $\mathbf{AB}-\Delta\mathbf{W}_t^\star$ as 
$$
\mathbf{A}\mathbf{B} - \Delta\mathbf{W}^\star_t=
\begin{pmatrix}
\mathbf{U}\mathbf{V} - \bm\Sigma_t & \quad  \mathbf{U}\mathbf{K}\\[1mm]
\mathbf{J}\mathbf{V} &\quad  \mathbf{J}\mathbf{K} - {\bm\Sigma}'_t 
\end{pmatrix},
$$
and we are ready to bound the difference {$\mathbf{AB} - \Delta \mathbf{W}^\star$}:
\begin{align*}
\label{eqn:ours-scaling}
\|\mathbf{AB} - \Delta \mathbf{W}^\star \|_{\mathrm{op}} &= \|\mathbf{AB} - \Delta \mathbf{W}_{t}^\star + \Delta \mathbf{W}_{t}^\star - \Delta \mathbf{W}^\star  \|_{\mathrm{op}} \\
&\le \|\mathbf{AB} - \Delta \mathbf{W}_{t}^\star \|_{\mathrm{op}} + \|\Delta \mathbf{W}_{t}^\star -  \Delta \mathbf{W}^\star \|_{\mathrm{op}}  \\
&\le \|\mathbf{UV}-\mathbf{\Sigma}_t\|_{\mathrm{op}} +\|\mathbf{UK}\|_{\mathrm{op}} +\|\mathbf{JV}\|_{\mathrm{op}} +\|\mathbf{JK}-{\bm \Sigma}'_t\|_{\mathrm{op}} + \epsilon_1,
\end{align*}
where we note that $\bm \Sigma_t = \Delta \mathbf{W}_{t}^{[:j]}$ (after diagonalization of $\Delta \mathbf{W}^\star_{t}$). To ensure convergence, it suffices to show that the dominant term $\mathbf{UV}$ approaches $\mathbf{\Sigma}_t$ while the error terms  $(\mathbf{J},\mathbf{K})$ remain small. 
Let us first extract the gradient descent update of the top left block $\mathbf{U}$:
\begin{align}
    \mathbf{U}_+ =& \mathbf{U} + \eta \Bigl[\;(\bm \Sigma_t - \mathbf{U}\mathbf{V})\mathbf{V}^\intercal \;+\; (-\mathbf{U}\mathbf{K})\,\mathbf{K}^\intercal\Bigr]\nonumber\\
     =& \mathbf{U} + \eta\left(\bm \Sigma_t\,\mathbf{V}^\intercal - \mathbf{U}\bigl(\mathbf{V} \mathbf{V}^\intercal+ \mathbf{K}\mathbf{K}^\intercal\bigr)\right).\nonumber
\end{align}

Similarly, we have
$$
\mathbf{V}_+ = \mathbf{V} + \eta\Bigl(\mathbf{U}^\intercal\ \bm \Sigma_t- \bigl(\mathbf{U}^\intercal\mathbf{U} + \mathbf{J}^\intercal\mathbf{J}\bigr)\mathbf{V}\Bigr),
$$
$$
\mathbf{J}_+ = \mathbf{J} + \eta\Bigl({\bm\Sigma'_t}\,\mathbf{K}^\intercal - \mathbf{J}\bigl(\mathbf{V}\mathbf{V}^\intercal + \mathbf{K}\mathbf{K}^\intercal\bigr)\Bigr),
$$
$$
\mathbf{K}_+ = \mathbf{K} + \eta\Bigl(\mathbf{J}^\intercal{\bm\Sigma}'_t - \bigl(\mathbf{U}^\intercal\mathbf{U} + \mathbf{J}^\intercal\mathbf{J}\bigr)\mathbf{K}\Bigr).
$$
To account for the potential imbalance of $\mathbf{U}$ and $\mathbf{V}$, we introduce the following quantities,
$$
\mathbf{F}=\frac{\mathbf{U}+\mathbf{V}^\intercal}{2}\quad\text{and}\quad\mathbf{G}=\frac{\mathbf{U}-\mathbf{V}^\intercal}{2},
$$
so that
$$
\mathbf{U}=\mathbf{F}+\mathbf{G}\quad\text{and}\quad \mathbf{V}^\intercal=\mathbf{F}-\mathbf{G}.
$$

Then, the updates for $\mathbf{F}$ and $\mathbf{G}$ are given by
$$
\begin{aligned}
\mathbf{F}_+ &=\tfrac{1}{2}\Bigl(\mathbf{U}_++\mathbf{V}^\intercal_+\Bigr)\\[1mm]
&=\tfrac{1}{2}\Bigl[\mathbf{U}+\mathbf{V}^\intercal +\eta\Bigl(\bm\Sigma_t\,\mathbf{V}^\intercal+\bm\Sigma_t\mathbf{U}\,\Bigr) - \eta\Bigl(\mathbf{U}\bigl(\mathbf{V}\mathbf{V}^\intercal+\mathbf{K}\mathbf{K}^\intercal\bigr)+ \mathbf{V}^\intercal\bigl(\mathbf{U}^\intercal\mathbf{U}+\mathbf{J}^\intercal\mathbf{J}\bigr)\Bigr)\Bigr]\\[1mm]
&=\mathbf{F} + \eta\,\bm\Sigma_t\,\mathbf{F} - \tfrac{\eta}{2}\Bigl[\; \bigl(\mathbf{F}+\mathbf{G}\bigr)\Bigl(\mathbf{V}\mathbf{V}^\intercal+\mathbf{K}\mathbf{K}^\intercal\Bigr)+ \bigl(\mathbf{F}-\mathbf{G}\bigr)\Bigl(\mathbf{U}^\intercal\mathbf{U}+\mathbf{J}^\intercal\mathbf{J}\Bigr)\Bigr].
\end{aligned}
$$
A similar computation gives
$$
\begin{aligned}
\mathbf{G}_+ &= \tfrac{1}{2}\Bigl(\mathbf{U}_+-\mathbf{V}^\intercal_+\Bigr)\\[1mm]
&=\mathbf{G} - \eta\bm\Sigma_t\mathbf{G} - \tfrac{\eta}{2}\Bigl[\; \bigl(\mathbf{F}+\mathbf{G}\bigr)\Bigl(\mathbf{V}\mathbf{V}^\intercal+\mathbf{K}\mathbf{K}^\intercal\Bigr)- \bigl(\mathbf{F}-\mathbf{G}\bigr)\Bigl(\mathbf{U}^\intercal\mathbf{U}+\mathbf{J}^\intercal\mathbf{J}\Bigr)\Bigr].
\end{aligned}
$$
It is now natural to introduce the following equations:
$$
\mathbf{P}=\bm\Sigma_t-\mathbf{F}\mathbf{F}^\intercal+\mathbf{G}\mathbf{G}^\intercal,\qquad
\mathbf{Q}=\mathbf{F}\mathbf{G}^\intercal-\mathbf{G}\mathbf{F}^\intercal,
$$
so that one can verify that 
$$
\mathbf{P}+\mathbf{Q}=\bm\Sigma_t-\mathbf{U}\mathbf{V}.
$$

According to the Proposition B.2 and B.5 of \cite{jiang2023algorithmic}, it holds with high probability that for any $T_1+T_2+T_3\leq i\leq T$,
\begin{equation*}
\begin{split}
    &\|\mathbf{U}_i\mathbf{K}_i \|_{\mathrm{op}} \leq 3\rho^{\frac{1-\gamma}{2(1+\gamma)}}\sigma_{1}, ~~~ \|\mathbf{J}_i\mathbf{V}_i\|_{\mathrm{op}}\leq 3\rho^{\frac{1-\gamma}{2(1+\gamma)}} \sigma_{1},\\ 
    &\| \mathbf{J}_i\mathbf{K}_i\|_{\mathrm{op}}\leq \rho^{\frac{1-\gamma}{(1+\gamma)}}\sigma_{1}, ~~~\|\mathbf{Q}_i\|_{\mathrm{op}} \le  4\rho^{\frac{1-\gamma}{2(1+\gamma)}}\sqrt{2j}\sigma_{1}.
\end{split}
\end{equation*}

\begin{equation*}
\begin{split}
        \|\mathbf{P}_{i}\|_{\mathrm{op}} &\le (1-0.79\eta\sigma_j)^2 + 6\eta^2\sigma_1^2\|\mathbf{P}_{i-1}\|_{\mathrm{op}} +80\eta\rho^{\frac{1-\gamma}{1+\gamma}j}j\sigma_1 \\
        & \le (1-\frac{3\eta\sigma_j}{2})\|\mathbf{P}_{i-1} \|_{\mathrm{op}} + 80\eta\rho^{\frac{1-\gamma}{1+\gamma}j\sigma_1^2} \\
        & \le 2\left(1-\frac{3\eta\sigma_j}{2}\right)^{i-T_1-T_2}\sigma_1 + \frac{80\rho^{\frac{1-\gamma}{1+\gamma}}j\sigma_1^2}{\sigma_r}.
\end{split}
\end{equation*}
Given that $T_3 = \left\lfloor \frac{\log(\rho^{\frac{1-\gamma}{2(1+\gamma)}}/3)}{\log(1-\frac{3}{2}\eta\sigma_{j})} \right\rfloor +1$, it follows that for all $i$ satisfying $T_1+T_2+T_3\leq i\leq T$, we have $\|\mathbf{P}_i\|_{\mathrm{op}}\le \rho^{\frac{1-\gamma}{2(1+\gamma)}}\sigma_1$. 
\begin{equation*}
    \|\mathbf{U}_i\mathbf{V}_i-\mathbf{\Sigma}_t\|_{\mathrm{op}} = \|\mathbf{P}_i+\mathbf{Q}_i \|_{\mathrm{op}}\leq \|\mathbf{P}_i\|_{\mathrm{op}} +\|\mathbf{Q}_i\|_{\mathrm{op}}\leq \rho^{\frac{1-\gamma}{2(1+\gamma)}}\sigma_{1} + 4\rho^{\frac{1-\gamma}{2(1+\gamma)}}\sqrt{2j}\sigma_{1}.
\end{equation*}
By combining these parts, we can have
\begin{equation*}
    \|\mathbf{A}_i\mathbf{B}_i -\Delta \mathbf{W}^{[:j]}\|_{\mathrm{op}}\leq  8\rho^{\frac{1-\gamma}{2(1+\gamma)}}\sigma_{1} + 4\rho^{\frac{1-\gamma}{2(1+\gamma)}}\sqrt{2j}\sigma_{1} +\epsilon_1= (8\rho^{\frac{1-\gamma}{2(1+\gamma)}}+4\rho^{\frac{1-\gamma}{2(1+\gamma)}}\sqrt{2j})\sigma_{1} +\epsilon_1
\end{equation*}
\end{proof}

\subsection{Additional Results for Sec.~\ref{subsec: analysis}}
\label{supp:alpha_values}
To further investigate the temporal evolution of learned LoRA magnitudes $\mathcal{M}=\{\alpha_k\}_{k=1}^N$ in SD-LoRA, we conduct additional experiments on ImageNet-R~\citep{boschini2022transfer} with an extended task length of $N=20$ and a more challenging DomainNet dataset~\citep{peng2019moment}. As visualized in Fig.~\ref{fig:supp1},
both experimental configurations reveal a systematic decrease in 
$\alpha_k$ values throughout the training process. These results corroborate the descending trend observed in our main experiments, demonstrate the consistent behaviors of SD-LoRA across extended task horizons and diverse domain distributions, and align with the theoretical analysis presented in Sec.~\ref{sec:theory}.
\begin{figure*}[h]
\centering
\includegraphics[width=0.73 \textwidth]{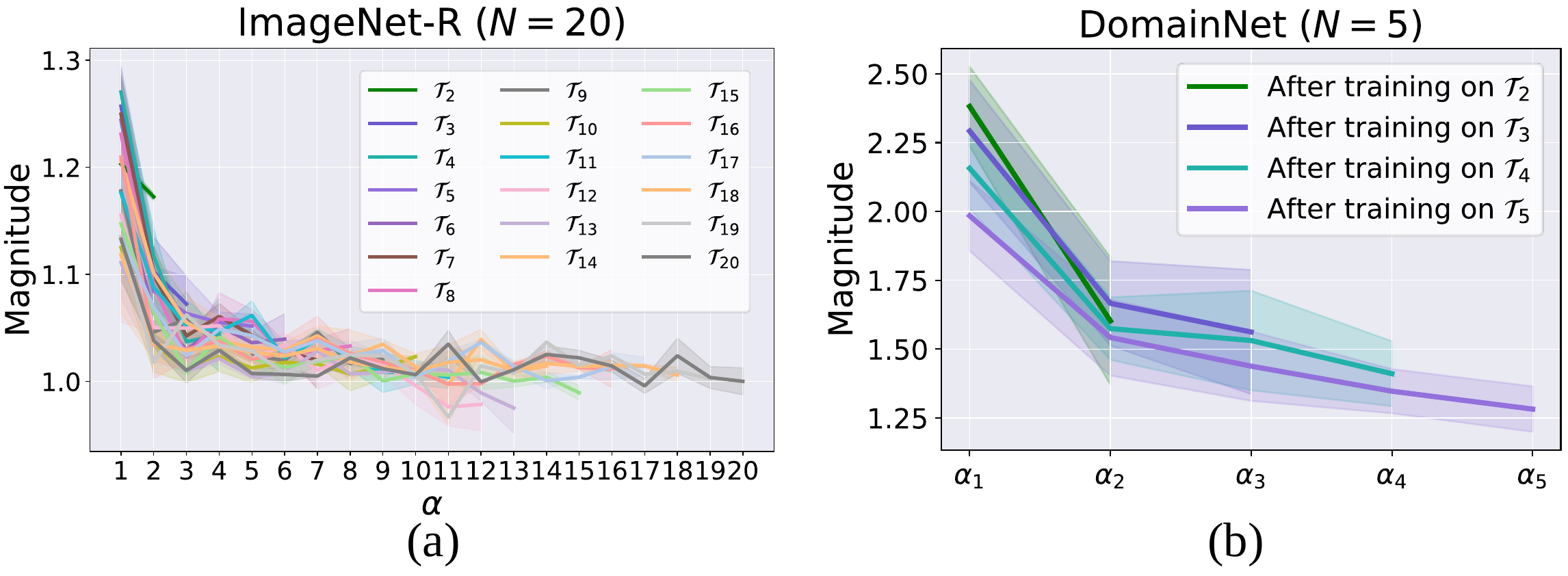}
\caption{Learned LoRA magnitudes $\mathcal{M} = \{\alpha_k\}_{k=1}^N$ in SD-LoRA on $\textbf{(a)}$ ImageNet-R ($N=20$) and $\textbf{(b)}$  DomainNet ($N=5$).}
\label{fig:supp1}
\end{figure*}

\subsection{Results on Other CL Benchmarks}
\label{supp:other-cl-results}
In addition to ImageNet-R, ImageNet-A, and DomainNet, we evaluate the proposed SD-LoRA on two other widely recognized benchmarks:
 CIFAR-100~\citep{krizhevsky2009learning} and CUB-200~\citep{WahCUB_200_2011}. CIFAR-100 is a standard dataset for image classification, comprising $60,000$ images evenly distributed across $100$ classes, with $600$ images per class. For our experiments, we split CIFAR-100 into ten tasks, each containing ten classes. Similarly, CUB-200 is a fine-grained dataset specifically designed for bird classification, which consists of $11,788$ images across $200$ classes. We divide this dataset into ten tasks, with each task encompassing $20$ species. As shown in Table~\ref{tab:supp}, the proposed SD-LoRAs consistently deliver outstanding performance on both datasets.

Additionally, we provide supplementary results to those in Fig.~\ref{fig:finding1}(a) by analyzing the relative distances between fine-tuned and pre-trained weights across different benchmarks, backbones, and task lengths. As shown in Fig.~\ref{fig:finding1-supp}, the observed trends remain consistent with Finding 1.

\begin{figure*}[t]
\centering
\includegraphics[width=0.94 \textwidth]{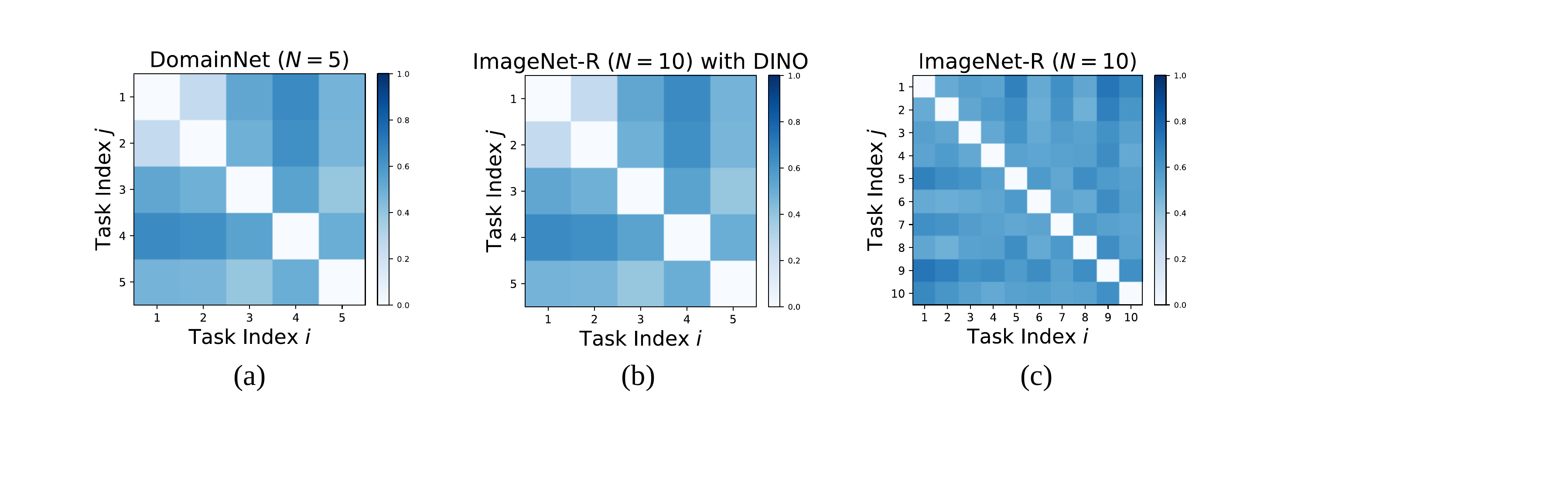}
\caption{\rre{Relative distances computed on $\textbf{(a)}$ DomainNet ($N=5$), $\textbf{(b)}$ ImageNet-R ($N=5$) using ViT-B/16 from DINO, and $\textbf{(c)}$ ImageNet-R($N=10$), respectively.}}

\label{fig:finding1-supp}
\end{figure*}


\begin{table*}[h]
\caption{Performance comparison on CIFAR100 and CUB200.}

\begin{center}
\scalebox{0.90}{ 
\begin{tabular}{c|cc|cc}
\toprule
 \multicolumn{1}{l|}{\multirow{2}{*}{Method}} & \multicolumn{2}{c|}{CIFAR100} & \multicolumn{2}{c}{CUB200}   \\ 
 \cline{2-5} 
     & $\textrm{Acc}$ $\uparrow$  & $\textrm{AAA}$ $\uparrow$ & $\textrm{Acc}$ $\uparrow$ & $\textrm{AAA}$ $\uparrow$ \\
\hline
\multicolumn{1}{l|}{\multirow{1}{*}{Full Fine-Tuning}} &   \ms{69.49}{0.50} & \ms{80.35}{0.87}  & \ms{51.43}{1.41} & \ms{69.74}{0.93}  \\

\multicolumn{1}{l|}{\multirow{1}{*}{L2P~\citep{wang2022learning}}} &   \ms{83.18}{1.20} & \ms{87.69}{1.05} & \ms{65.18}{2.49} & \ms{76.12}{1.27}   \\

\multicolumn{1}{l|}{\multirow{1}{*}{DualPrompt~\citep{wang2022dualprompt}}} & \ms{81.48}{0.86} & \ms{86.41}{0.66} & \ms{68.00}{1.06} & \ms{79.40}{0.88}  \\

\multicolumn{1}{l|}{\multirow{1}{*}{CODA-Prompt~\citep{smith2023coda}}} & \ms{86.31}{0.12} & \ms{90.67}{0.22} & \ms{71.92}{0.33} & \ms{78.76}{0.65}  \\

\multicolumn{1}{l|}{\multirow{1}{*}{InfLoRA~\citep{liang2024inflora}}} & \ms{86.75}{0.35} & \ms{91.72}{0.15} & \ms{70.82}{0.23} & \ms{81.39}{0.14}  \\

\cline{1-5}
\multicolumn{1}{l|}{\multirow{1}{*}{SD-LoRA}} & \ms{\textbf{88.01}}{0.31} & \ms{\textbf{92.54}}{0.18} & \ms{\textbf{77.48}}{0.20} & \ms{\textbf{85.59}}{0.44}   \\

\multicolumn{1}{l|}{\multirow{1}{*}{SD-LoRA-RR}} & \ms{87.26}{0.22} & \ms{92.05}{0.31} & \ms{76.35}{0.28} & \ms{83.89}{0.35}  \\

\multicolumn{1}{l|}{\multirow{1}{*}{SD-LoRA-KD}} & \ms{87.09}{0.45} & \ms{92.01}{0.33} & \ms{75.95}{0.55} & \ms{83.21}{0.31}  \\

\bottomrule
\end{tabular}
}
\end{center}
\label{tab:supp}
\end{table*}

\end{document}